\documentclass{article}
\usepackage{fullpage,amsmath,amssymb, amsthm}
\usepackage[utf8]{inputenc} 
\usepackage[T1]{fontenc}    
\usepackage{hyperref}       
\usepackage{url}            
\usepackage{booktabs}       
\usepackage{amsfonts}       
\usepackage{nicefrac}       
\usepackage{microtype}      
\usepackage{multirow}
\usepackage{graphicx}
\usepackage{bm}
\usepackage{algorithm}
\usepackage[noend]{algpseudocode}
\usepackage{textcomp}

\usepackage[numbers]{natbib}


\newtheorem{thm}{Theorem}
\newtheorem{prop}[thm]{Proposition}

\newtheorem{lemma}[thm]{Lemma}
\newtheorem{defn}[thm]{Definition}

\newcommand{\R}{\mathbb{R}}

\newcommand{\bX}{\bm{X}}
\newcommand{\bY}{\bm{Y}}

\newcommand{\sE}{{\mathcal E}}
\newcommand{\sF}{{\mathcal F}}

\newcommand{\sH}{{\mathcal H}}

\newcommand{\sX}{{\mathcal X}}

\renewcommand{\a}{\alpha}
\renewcommand{\b}{\beta}

\newcommand{\g}{\gamma}
\newcommand{\eps}{\epsilon}

\newcommand{\sign}{\mathop{\mathrm{sign}}}
\newcommand{\argmin}{\operatornamewithlimits{arg\ min}}

\newcommand{\ind}[1]{{\bf 1}_{\{#1\}}}
\newcommand{\norm}[1]{\left\|#1\right\|}
\newcommand{\abs}[1]{\left\lvert #1 \right\rvert}
\newcommand{\set}[1]{\left\{#1\right\}}

\newcommand{\brac}[1]{\left[#1\right]}

\newcommand{\bcase}{\left\{ \begin{array}{ll} }
\newcommand{\ecase}{\end{array} \right. }
\newcommand{\bbP}{\mathbb{P}}
\newcommand{\bbE}{\mathbb{E}}
\newcommand{\ee}[2]{\bbE_{#1}\brac{#2}}
\newcommand{\e}[1]{\bbE\brac{#1}}

\newcommand{\sEhat}{\widehat{\sE}}
\newcommand{\fhat}{\widehat{f}}

\newcommand{\ellt}{\tilde{\ell}}
\newcommand{\Yt}{\tilde{Y}}

\newcommand{\Pt}{\tilde{P}}

\newcommand{\Pp}{P_+}
\newcommand{\Pm}{P_-}
\newcommand{\cpp}{\kappa^+}
\newcommand{\cpm}{\kappa^-}
\newcommand{\lp}{\gamma}
\newcommand{\blp}{\bm{\gamma}}
\newcommand{\lpp}{\gamma^+}
\newcommand{\lpm}{\gamma^-}
\newcommand{\Lp}{\Gamma}
\newcommand{\Lpp}{\Lp^+}
\newcommand{\Lpm}{\Lp^-}
\newcommand{\Lphat}{\widehat{\Lp}}
\newcommand{\Lpphat}{\widehat{\Lp}^+}
\newcommand{\Lpmhat}{\widehat{\Lp}^-}

\newcommand{\Lppbp}{\underline{\Lp}^+}
\newcommand{\Lpmbp}{\underline{\Lp}^-}
\newcommand{\Lpphatbp}{\widehat{\underline{\Lp}}^+}
\newcommand{\Lpmhatbp}{\widehat{\underline{\Lp}}^-}

\newcommand{\lphat}{\widehat{\lp}}
\newcommand{\lpphat}{\widehat{\lp}^+}
\newcommand{\lpmhat}{\widehat{\lp}^-}
\newcommand{\cp}{\kappa}
\newcommand{\cphat}{\widehat{\kappa}}
\newcommand{\sEt}{\tilde{\mathcal{E}}}
\newcommand{\sig}{\sigma}
\newcommand{\ubar}[1]{\underline{#1}}

\newcommand{\IIM}{{\bf (IIM)} }
\newcommand{\IBM}{{\bf (IBM)} }
\newcommand{\CIIM}{{\bf (CIIM)} }
\newcommand{\CIBM}{{\bf (CIBM)} }
\newcommand{\LP}{{\bf (LP)} }

\newcommand{\SR}{{\bf (SR)} }
\newcommand{\HM}{\textsc{HM}}
\newcommand{\AM}{\textsc{AM}}
\newcommand{\EPR}{\textsc{EPR}}

\title{Learning from Label Proportions: \\A Mutual Contamination Framework}

\author{Clayton Scott and Jianxin Zhang \\
Electrical Engineering and Computer Science\\
University of Michigan
}

\begin{document}

\maketitle

\begin{abstract}
Learning from label proportions (LLP) is a weakly supervised setting for classification in which unlabeled training instances are grouped into bags, and each bag is annotated with the proportion of each class occurring in that bag. Prior work on LLP has yet to establish a consistent learning procedure, nor does there exist a theoretically justified, general purpose training criterion. In this work we address these two issues by posing LLP in terms of mutual contamination models (MCMs), which have recently been applied successfully to study various other weak supervision settings. In the process, we establish several novel technical results for MCMs, including unbiased losses and generalization error bounds under non-iid sampling plans. We also point out the limitations of a common experimental setting for LLP, and propose a new one based on our MCM framework.
\end{abstract}

\section{Introduction}

Learning from label proportions (LLP) is a weak supervision setting for classification. In this problem, training data come in the form of bags. Each bag contains unlabeled instances and is annotated with the proportion of instances arising from each class. Various methods for LLP have been developed, including those based on support vector machines and related models \cite{rueping10, yu13icml, wang15, qi17, chen17npsvm, lai14cvpr, shi17}, Bayesian and graphical models \cite{kuck05uai, hernandez13, sun17, poyiadzi18, hernandez18}, deep learning \cite{li15alter, ardehaly17, dulac19arxiv, liu19neurips, tsai2020learning}, clustering \cite{chen09, stolpe11}, and random forests \cite{shi18}. In addition, LLP has found various applications including image and video analysis \citep{chen14,lai14cvpr}, high energy physics \cite{dery18}, vote prediction \cite{sun17}, remote sensing \cite{li15alter, ding2017}, medical image analysis \cite{bortsova18}, activity recognition \cite{poyiadzi18}, and reproductive medicine \cite{hernandez18}. 

Despite the emergence of LLP as a prominent weak learning paradigm, the theoretical underpinnings of LLP have been slow to develop. In particular, prior work has not established an algorithm for LLP that is consistent with respect to a classification performance measure. Furthermore, there does not even exist a general-purpose, theoretically grounded empirical objective for training LLP classifiers.

We propose a statistical framework for LLP based on mutual contamination models (MCMs), which have been used previously as models for classification with noisy labels and other weak supervision problems \cite{scott13colt, blanchard14, menon15icml, blanchard16ejs, katzsam19jmlr}. We use this framework to motivate a principled empirical objective for LLP, prove generalization error bounds associated to two bag generation models, and establish universal consistency with respect to the balanced error rate (BER). The MCM framework further motivates a novel experimental setting that overcomes a limitation of earlier experimental comparisons.

{\bf Related Work.}
\citet{quadrianto09jmlr} study an exponential family model for labels given features, and show that the model is characterized by a certain ``mean map" parameter that can be estimated in the LLP setting. They also provide Rademacher complexity bounds for the mean map and the associated log-posterior, but do not address a classification performance measure. \citet{patrini14nips} extend the work of \cite{quadrianto09jmlr} in several ways, including a generalization error bound on the risk of a classifier. This bound is expressed in terms of an empirical LLP risk, a ``bag Rademacher complexity," and a ``label proportion complexity." The authors state that when bags are pure (LPs close to 0 or 1), the last of these terms is small, while for impure bags, the second term is small and the first term increases. While this bound motivates their algorithms, it is not clear how such a bound would imply consistency. \citet{yu15tr} study the idea of minimizing the ``empirical proportion risk" (EPR), which seeks a classifier that best reproduces the observed LPs. They develop a PAC-style bound on the accuracy of the resulting classifier under the assumption that all bags are very pure. Our work is the first to develop generalization error analysis and universal consistency for a classification performance measure, and we do so under a broadly applicable statistical model on bags.

The literature on LLP has so far yielded two general purpose training objectives that are usable across a variety of learning models. The first of these, the aforementioned EPR, minimizes the average discrepancy between observed and predicted LPs, where discrepancy is often measured by absolute or squared error in the binary case \cite{yu15tr, tsai2020learning, dery18}, and cross-entropy in the multiclass case
\cite{tsai2020learning, dulac19arxiv, liu19neurips, bortsova18}. While \cite{yu15tr} has been cited as theoretical support for this objective, that paper assumes the bags are very pure, and even provides examples of EPR minimization failure when bags are not sufficiently pure. We offer our own counterexample in an appendix. The second is the combinatorial objective introduced by \cite{yu13icml} that incorporates the unknown labels as variables in the optimization, and jointly optimizes a conventional classification empirical risk together with a term (usually EPR) that encourages correctness of the imputed labels \cite{yu13icml, wang15, li15alter, qi17, chen17npsvm, shi17, shi18, lai14cvpr, dulac19arxiv}. To our knowledge there is also no statistical theory supporting this objective. In contrast, we propose a theoretically grounded, general purpose criterion for training LLP models.

Finally, we note that an earlier version of this work approached LLP using so-called ``label-flipping" or ``class-conditional" noise models, as opposed to MCMs \cite{scott19arxiv}. While that approach lead to the same algorithm described here, that setting is less natural for LLP, and the present version adds several more theoretical and experimental results.



{\bf Notation.} Let $\sX$ denote the feature space and $\{-1,1\}$ the label space. For convenience we often abbreviate $-1$ and $+1$ by ``-" and ``+", and write $\{\pm\} = \{-,+\}$. A binary classification loss function, or {\em loss} for short, is a function $\ell:\R \times \{-1,1\} \to \R$ (we allow losses to take negative values). For $\sig \in \{\pm\}$, denote $\ell_{\sig}(t):= \ell(t,\sig)$. A loss $\ell$ is {\em Lipschitz (continuous)} if there exists $L$ such that for every $\sig \in \{\pm\}$, and every $t, t' \in \R$, $|\ell_\sig(t)-\ell_\sig(t')| \le L|t - t'|$. The smallest such $L$ for which this property holds is denoted $|\ell|$. Additionally, we define $|\ell|_0:=\max(|\ell_+(0)|,|\ell_-(0)|)$.


A decision function is a measurable function $f:\sX \to \R$. The classifier induced by a decision function $f$ is the function $x \mapsto \sign(f(x))$. We will only consider classifiers induced by a decision function. In addition, we will often refer to a decision function as a classifier, in which case we mean the induced classifier. Let $\Pp$ and $\Pm$ be the class-conditional distributions of the feature vector $X$, and denote $P = (\Pm,\Pp)$. The performance measure considered in this work is the {\em balanced error rate} (BER) which, for a given loss $\ell$, and class conditional distributions $P = (\Pp,\Pm)$, is defined by $\sE_{P}^\ell(f):=\frac12 \bbE_{X \sim \Pp}[\ell_+(f(X))] + \frac12\bbE_{X \sim \Pm}[\ell_-(f(X))]$. 

For an integer $n$, denote $[n]:= \{1,2,\ldots,n\}$.
Given a sequence of numbers $(a_i)_{i\in [m]}$, denote the arithmetic and harmonic means by $\AM(a_i):=\frac1{m}\sum_{i \in [m]}a_i$ and $\HM(a_i):=(\frac1{m}\sum_{i \in [m]} a_i^{-1})^{-1}$. Finally, define the probability simplex $\Delta^N := \{w \in \R^N \, | \, w_i \ge 0 \, \forall i, \text{ and } \sum_i w_i =1\}$.

\section{Mutual Contamination Models}
\label{sec:unbiased}

In this section we define MCMs and present new technical results for learning from MCMs that motivate our study of LLP in the next section, and which may also be of independent interest. We will consider collections of instances $X_1, \ldots, X_m \sim \lp \Pp + (1-\lp) \Pm$, where $\gamma \in [0,1]$ and $m$ are fixed. Foreshadowing  LLP, we refer to such collections of instances as \emph{bags}. 

We adopt the following assumption on bag data generation, with two cases depending on within-bag dependencies. Suppose there are $L$ total bags with sizes $n_i$, $i\in[L]$, proportions $\lp_i \in [0,1]$, and elements $X_{ij}$, $i\in[L],j\in[n_i]$. We assume
\begingroup
\addtolength\leftmargini{-0.3in}
\begin{quote}
The distributions $\Pp$ and $\Pm$ are the same for all bags. $\lp_i$ and $m_i$ may vary from bag to bag. If $i \ne r$, then $X_{ij}$ and $X_{rs}$ are independent $\forall j,s$. Furthermore, for all $i$,
\begin{description}
\item[(IIM)] In the {\em independent instance model}, $X_{ij} \stackrel{iid}{\sim} \lp_i \Pp + (1-\lp_i) \Pm$;
\item[(IBM)] In the {\em independent bag model}, the marginal distribution of $X_{ij}$ is $\lp_i \Pp + (1-\lp_i) \Pm$.
\end{description}
\end{quote}
\endgroup
\IBM allows the instances within each bag to be dependent. Furthermore, any dependence structure, such as a covariance matrix, \emph{may change from bag to bag}. \IIM is a special case of \IBM that allows us to quantify the impact of bag size $n_i$ on generalization error.

\subsection{Mutual Contamination Models and Unbiased Losses}

Recall that $P$ denotes the pair $(\Pp, \Pm)$. Let $\cp = (\cpp,\cpm)$ be such that $\cpp + \cpm < 1$. A {\em mutual contamination model} is the pair $P^\cp := (\Pp^\cp,\Pm^\cp)$ where
\[
\Pp^\cp:=(1-\cpp)\Pp + \cpp \Pm \qquad \text{and} \qquad
\Pm^\cp:=(1-\cpm)\Pm + \cpm \Pp.
\]
$\Pp^\cp$ and $\Pm^\cp$ may be thought of as noisy or contaminated versions of $\Pp$ and $\Pm$, respectively, where the contamination arises from the other distribution. MCMs are common models for label noise \cite{scott13colt, menon15icml, blanchard16ejs}, where $\cp^\sig$ may be interpreted as the label noise rates $\bbP(Y=-\sig | \Yt = \sig)$, where $Y$ and $\Yt$ are the true and observed labels. 


Given $\ell$ and $\cp$ define the loss $\ell^\cp$ by
\[
\ell_\sig^\cp(t) := \frac{1-\cp^{-\sig}}{1 - \cpm - \cpp} \ell_\sig(t) - \frac{\cp^{-\sig}}{1 - \cpm - \cpp} \ell_{-\sig}(t), \qquad \sig \in \{\pm\}.
\]
This loss undoes the bias present in the mutual contamination model.

\begin{prop}
\label{prop:unbiased}
Consider any $P=(\Pp,\Pm)$, $\cp = (\cpp,\cpm)$ with $\cpp + \cpm < 1$, and loss $\ell$. For any $f$ such that all four of the quantities $\bbE_{X \sim P_{\pm}} \ell_{\pm}(f(X))$
exist and are finite, $\sE_P^\ell(f) = \sE_{P^\cp}^{\ell^\cp}(f)$.
\end{prop}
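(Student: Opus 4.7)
The plan is to verify the identity by direct linear expansion. I would introduce shorthand $a_\sig := \bbE_{X\sim P_\sig}[\ell_\sig(f(X))]$ for the ``diagonal'' expectations and $b_\sig := \bbE_{X\sim P_\sig}[\ell_{-\sig}(f(X))]$ for the ``off-diagonal'' ones; these are precisely the four quantities assumed finite, so every rearrangement below is justified by ordinary linearity of expectation. With this notation, $\sE_P^\ell(f) = \tfrac12(a_+ + a_-)$, which is the target.

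Next I would compute each of the two terms in $\sE_{P^\cp}^{\ell^\cp}(f) = \tfrac12 \bbE_{X\sim \Pp^\cp}[\ell_+^\cp(f(X))] + \tfrac12 \bbE_{X\sim \Pm^\cp}[\ell_-^\cp(f(X))]$ by first expanding the mixture $\Pp^\cp = (1-\cpp)\Pp + \cpp \Pm$ (and similarly for $\Pm^\cp$) and then substituting the definition of $\ell^\cp$. This reduces each of the two terms to an explicit linear combination of $a_+, a_-, b_+, b_-$ with a common factor $1/(1-\cpm-\cpp)$.

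Finally, I would collect coefficients. The only nontrivial check is the identity $(1-\cpp)(1-\cpm) - \cpp\cpm = 1-\cpp-\cpm$, which shows that the coefficient of $a_+$ (and, symmetrically, of $a_-$) in $\sE_{P^\cp}^{\ell^\cp}(f)$ equals $\tfrac12$. The off-diagonal coefficients of $b_+$ and $b_-$ pair up as $(1-\cpp)\cpm + (-(1-\cpp)\cpm)$ and $\cpp(1-\cpm) + (-\cpp(1-\cpm))$ and so vanish. Combining these yields $\sE_{P^\cp}^{\ell^\cp}(f) = \tfrac12(a_+ + a_-) = \sE_P^\ell(f)$.

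There is no real obstacle: the result is essentially a $2\times 2$ linear-algebraic fact, namely that the contamination matrix $\bigl(\begin{smallmatrix}1-\cpp & \cpp \\ \cpm & 1-\cpm\end{smallmatrix}\bigr)$ and the loss-correction matrix built into $\ell^\cp$ are inverses up to the factor $1-\cpp-\cpm$. The only place to be careful is the bookkeeping of signs in the definition of $\ell^\cp$ (the $-\sig$ indexing) when matching it to the corresponding mixture weights; it is worth stating each of the four scalar expansions explicitly before summing, to keep the argument transparent.
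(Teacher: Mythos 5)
Your proposal is correct and matches the paper's argument: both proofs reduce the identity to matching the coefficients of the four expectations $\bbE_{X \sim P_{\pm}}[\ell_{\pm}(f(X))]$ on either side, with the key cancellation $(1-\cpp)(1-\cpm) - \cpp\cpm = 1-\cpp-\cpm$ doing the work. The only cosmetic difference is that the paper solves for the correction coefficients as the unique solution of the resulting linear system, whereas you verify directly that the stated $\ell^\cp$ satisfies it; the computation is the same.
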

This result mirrors a similar result established by \citet{natarajan18jmlr} under a label-flipping model for label noise, which is the other prominent models for random label noise besides the MCM. The proof simply matches coefficients of $\bbE_{X \sim P_{\pm}} \ell_{\pm}(f(X))$ on either side of the desired identity. 

In an appendix we 
offer a sufficient condition for $\ell^\cp$ to be convex. We also 
show (as an aside) that Prop. \ref{prop:unbiased} enables a simple proof of a known result concerning symmetric losses, i.e., losses for which $\ell(t,1) + \ell(t,-1)$ is constant, such as the sigmoid loss. In particular, symmetric losses are immune to label noise under MCMs, meaning the original loss $\ell$ can be minimized on data drawn from the MCM and still optimize the clean BER \cite{menon15icml,rooyen15tr,charoenphakdee19icml}.

The significance of Prop. \ref{prop:unbiased} is that $\sE_P^\ell(f)$ is the quantity we want to minimize, while $ \sE_{P^\cp}^{\ell^\cp}(f)$ can be  estimated given data from an MCM. In particular, given bags $X_1^+, \ldots, X_{n^+}^+ \sim \Pp^\cp$ and $X_1^-, \ldots, X_{n^-}^- \sim \Pm^\cp$, 
Prop. \ref{prop:unbiased} motivates minimizing the estimate of BER given by
\[
\widehat{\sE}(f) :=
\frac1{2n^+} \sum_{j=1}^{n^+} \ell_+^\cp(f(X_j^+)) + \frac1{2n^-} \sum_{j=1}^{n^-} \ell_-^\cp(f(X_j^-)) = \frac12 \sum_{\sig \in \{\pm\}} \frac1{n^\sig} \sum_{j=1}^{n^\sig} \ell_\sig^\cp(f(X_j^\sig))
\]
over $f \in \sF$, where $\sF$ is some class of decision functions. We have
\begin{prop}
\label{prop:unbiasedest}
Under \IBM, for any $f$ such that the quantities $\bbE_{X \sim P_{\pm}} \ell_{\pm}(f(X))$ exist and are finite, $\bbE[\widehat{\sE}(f)] = \sE_P^\ell(f)$.
\end{prop}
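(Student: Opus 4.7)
The plan is to reduce Proposition \ref{prop:unbiasedest} to Proposition \ref{prop:unbiased} via linearity of expectation, relying on the fact that under \IBM only the per-instance marginal distributions matter for the expectation of a sum.

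First I would apply linearity of expectation to $\widehat{\sE}(f)$, writing
\[
\bbE[\widehat{\sE}(f)] = \frac1{2n^+}\sum_{j=1}^{n^+} \bbE[\ell_+^\cp(f(X_j^+))] + \frac1{2n^-}\sum_{j=1}^{n^-} \bbE[\ell_-^\cp(f(X_j^-))].
\]
Next, I would invoke the \IBM assumption: although the $X_j^\sig$ within a common bag may be dependent (and the dependence structure may differ across bags), each $X_j^\sig$ has marginal distribution $P_\sig^\cp$. Since the expectation of $\ell_\sig^\cp(f(X_j^\sig))$ depends only on this marginal, each summand equals $\bbE_{X \sim P_\sig^\cp}[\ell_\sig^\cp(f(X))]$, and the inner averages collapse, yielding
\[
\bbE[\widehat{\sE}(f)] = \tfrac12 \bbE_{X \sim P_+^\cp}[\ell_+^\cp(f(X))] + \tfrac12 \bbE_{X \sim P_-^\cp}[\ell_-^\cp(f(X))] = \sE_{P^\cp}^{\ell^\cp}(f).
\]
Finally, I would apply Proposition \ref{prop:unbiased} to conclude $\sE_{P^\cp}^{\ell^\cp}(f) = \sE_P^\ell(f)$.

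Before doing this, I would verify that all relevant expectations exist. Since $\ell_\sig^\cp$ is a finite linear combination of $\ell_+$ and $\ell_-$, and $P_\sig^\cp$ is a convex combination of $P_+$ and $P_-$, the finiteness of $\bbE_{X \sim P_\pm}\ell_\pm(f(X))$ assumed in the statement implies that $\bbE_{X \sim P_\sig^\cp}[\ell_\sig^\cp(f(X))]$ exists and is finite, which also justifies the interchange of expectation and summation and the hypotheses needed to apply Proposition \ref{prop:unbiased}.

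There is essentially no obstacle here: the result is a one-line consequence of linearity of expectation once one recognizes that the \IBM hypothesis was tailored precisely so that marginal-only arguments suffice. The only point that warrants care is emphasizing that dependence among $X_{ij}$ within a bag (and heterogeneity of that dependence across bags) is irrelevant to the expectation calculation, which is exactly why $\widehat{\sE}$ remains unbiased in the more general \IBM setting rather than only in the iid \IIM case.
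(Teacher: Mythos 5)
Your proof is correct and matches the paper's intended argument: the paper treats this as an immediate consequence of linearity of expectation, the fact that under \IBM each $X_j^\sig$ has marginal $P_\sig^\cp$, and Proposition \ref{prop:unbiased}, which is exactly your route. The care you take with finiteness of the relevant expectations is consistent with the hypotheses as stated.
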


\subsection{Learning from Multiple Mutual Contamination Models}

In the next section we view LLP in terms of a more general problem that we now define. Suppose we are given $N$ different MCMs. Each has the same true class-conditional distributions $\Pp$ and $\Pm$, but possibly different contamination proportions $\cp_i = (\cpp_i, \cpm_i)$, $i\in [N]$. Let $P^{\cp_i} = (\Pp^{\cp_i},\Pm^{\cp_i})$ denote the $i$th MCM, and assume $\cpp_i + \cpm_i < 1$.
Now suppose that for each $i \in [N]$, we observe 
\begin{align*}
X_{i1}^+,\ldots,X_{i{n_i^+}}^+ \sim \Pp^{\cp_i} &:= (1 - \cpp_i) \Pp + \cpp_i \Pm, \\
X_{i1}^-,\ldots,X_{i{n_i^-}}^- \sim \Pm^{\cp_i} &:= (1 - \cpm_i) \Pm + \cpm_i \Pp.
\end{align*}
The problem of {\em learning from multiple mutual contamination models} (LMMCM) is to use all of the above data to design a single classifier that minimizes the clean BER $\sE_P^\ell$.

A natural approach to this problem is to minimize the weighted empirical risk
\[
\widehat{\sE}_w(f):= \sum_{i=1}^N w_i \widehat{\sE}_i(f) \quad \text{where} \quad \widehat{\sE}_i(f) := \frac1{2n_i^+} \sum_{j=1}^{n_i^+} \ell_+^{\cp_i}(f(X_{ij}^+)) + \frac1{2n_i^-} \sum_{j=1}^{n_i^-} \ell_-^{\cp_i}(f(X_{ij}^-)), 
\]
where $w \in \Delta^N$. By Prop. \ref{prop:unbiased}, under \IBM each $\widehat{\sE}_i(f)$ is an unbiased estimate of $\sE_P^\ell(f)$, and therefore so is $\widehat{\sE}_w(f)$. This leads to the question of how best to set $w$.
Intuitively, MCMs $P^{\cp_i}$ with less corruption should receive larger weights. We confirm this intuition by choosing $w_i$ to optimize a generalization error bound (GEB). Our GEBs uses two weighted, multi-sample extensions of Rademacher complexity, corresponding to \IIM and \IBM, that we now introduce.


Let $S$ denote all the data $X_{ij}^\sig$ from $N$ MCMs as described above.
\begin{defn}
Let $\sF$ be a class of decision functions. Assume that $\sup_{f \in \sF} \sup_{x \in \sX} |f(x)| < \infty$. For any $c \in \R^N_{\ge 0}$, define 
\begin{equation}
\label{eqn:iimrad}
    \mathfrak{R}_c^I(\sF) := \bbE_{S} \bbE_{(\eps_{ij}^\sig)} \Bigg[ \sup_{f \in \sF} \sum_{i=1}^N c_i \sum_{\sig \in \{\pm\}} \frac1{2n_i^\sig} \sum_{j=1}^{n_i^\sig} \eps_{ij}^\sig f(X_{ij}^\sig) \Bigg],
\end{equation}
and 
\begin{equation}
\label{eqn:ibmrad}
    \mathfrak{R}_c^B(\sF) := \bbE_{S} \bbE_{{((\sig_i,X_i)\sim \widehat{P}^{\cp_i})}_{i\in [N]}} \bbE_{(\eps_{i})} \Bigg[ \sup_{f \in \sF} \sum_{i=1}^N \eps_i c_i f(X_i) \Bigg],
\end{equation}
where $\eps_{ij}^\sig, \eps_i \stackrel{iid}{\sim} \text{{\em unif}}(\{-1,1\})$ are Rademacher random variables and $\widehat{P}^{\cp_i}$ is the distribution that selects $\sig_i \sim \text{{\em unif}}(\{-1,1\})$, and then draws $X_i$ uniformly from $X_{i,1}^\sig, \ldots, X_{i,n_i^\sig}^\sig$.
\end{defn}
The inner two summations in \eqref{eqn:iimrad} reflect an adaptation of the usual Rademacher complexity to the BER, and the outer summation reflects the multiple MCMs.  Eqn. \eqref{eqn:ibmrad} may be seen as a modification of \eqref{eqn:iimrad} where the inner two sums are viewed as an empirical expectation that is pulled out of the supremum.
If $\sF$ satisfies the following, then $\mathfrak{R}_c^I(\sF)$ and $\mathfrak{R}_c^B(\sF)$ are bounded by tractable expressions.
\begin{description}
\item[(SR)] There exist constants $A$ and $B$ such that $\sup_{f \in \sF} \sup_{x \in \sX} |f(x)| \le A$, and for all $M$, $x_1, \ldots, x_M \in \sX$, and $a \in \R^M_{\ge 0}$,
\[
\ee{(\eps_i)}{\sup_{f \in \sF} \sum_{i =1}^M \eps_i a_i f(x_i)} \le B \sqrt{ \sum_{i=1}^M a_i^2}.
\]
\end{description}
As one example of an $\sF$ satisfying \SR, let $k$ be a symmetric positive definite (SPD) kernel, bounded\footnote{An SPD kernel $k$ is bounded by $K$ if $\sqrt{k(x,x)} \le K$ for all $x$. For example, the Gaussian kernel $k(x,x') = \exp(-\gamma \|x - x'\|^2)$ is bounded by $K=1$.} by $K$, and let $\sH$ be the associated reproducing kernel Hilbert space (RKHS). Let $\sF_{K,R}^k$ denote the ball of radius $R$, centered at 0, in $\sH$. 
As a second example, assume $\sX \subset \R^d$ and $\| \sX \|_2 := \sup_{x \in \sX} \| x\|_2 < \infty$, where $\| \cdot \|_2$ is the Euclidean norm. Let $\alpha, \beta \in \R_+^M$ and denote $\left[x \right]_+ = \max(0,x)$. Define the class of  two-layer neural networks with ReLU activation by
\[\sF^{\text{NN}}_{\alpha, \beta} = \{f(x) = v^T \left[ Ux \right]_{+} : v \in \mathbb{R}^h, U \in \mathbb{R}^{h \times d}, \abs{v_i} \leq \alpha_i, \norm{u_i}_2 \leq \beta_i, i = 1, 2, \dots, h \}. 
\]
\begin{prop}
\label{prop:sr}
$\sF_{K,R}^k$ satisfies \SR with $(A,B) = (RK,RK)$, and $\sF_{\a,\b}^{\text{NN}}$ satisfies \SR with $(A,B) = (\|\a\|_2 \|\b\|_2 \|\sX\|_2, 2 \langle \alpha, \beta \rangle \| \sX \|_2)$.
\end{prop}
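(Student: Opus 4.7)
The plan is to verify the two conditions of \SR separately for each class.

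For $\sF_{K,R}^k$, both conditions follow from the reproducing property. The uniform bound is $\abs{f(x)} = \abs{\inner{f, k(\cdot,x)}_\sH} \le \norm{f}_\sH \sqrt{k(x,x)} \le RK$, so $A = RK$. For the Rademacher-type inequality, rewrite $\sum_i \eps_i a_i f(x_i) = \inner{f, \sum_i \eps_i a_i k(\cdot,x_i)}_\sH$ and use Cauchy--Schwarz with the norm constraint to pull the supremum out as $R \norm{\sum_i \eps_i a_i k(\cdot,x_i)}_\sH$. Jensen's inequality then passes the Rademacher expectation under a square root, and the orthogonality $\bbE[\eps_i \eps_j]=\ind{i=j}$ together with $k(x_i,x_i)\le K^2$ gives $B = RK$.

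For $\sF^{\text{NN}}_{\a,\b}$, the uniform bound $A$ comes from two applications of Cauchy--Schwarz: $\abs{v_j[u_j^T x]_+}\le \abs{v_j}\cdot\norm{u_j}_2\norm{x}_2\le \alpha_j\beta_j\norm{x}_2$ at each hidden unit, then $\sum_j \alpha_j\beta_j\le \norm{\a}_2\norm{\b}_2$ across units. The Rademacher bound is the main obstacle, and I plan to carry it out in three stages. First, peel the outer layer: for fixed $U$, maximizing over $v$ with $\abs{v_j}\le\alpha_j$ yields $\sum_j \alpha_j \abs{\sum_i \eps_i a_i [u_j^T x_i]_+}$, and because each $u_j$ is constrained independently the subsequent supremum over $U$ distributes inside the sum. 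Second, handle the absolute value: using $\sup_u \abs{g(u)}=\max(\sup_u g,\sup_u(-g))$ together with the $\eps\mapsto -\eps$ symmetry of the Rademacher distribution gives $\bbE_\eps \sup_u \abs{\cdot} \le 2\,\bbE_\eps \sup_u \sum_i \eps_i a_i [u^T x_i]_+$. Third, since $a_i\ge 0$, absorb $a_i$ into $x_i$ via $a_i[u^T x_i]_+=[u^T(a_i x_i)]_+$, then apply the Ledoux--Talagrand contraction with the $1$-Lipschitz map $[\cdot]_+$ (which vanishes at $0$) to strip the ReLU. The resulting linear supremum equals $\beta_j\norm{\sum_i \eps_i a_i x_i}_2$, and a final Jensen's inequality produces $\beta_j\norm{\sX}_2\sqrt{\sum_i a_i^2}$. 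Summing over $j$ yields $B = 2\inner{\a,\b}\norm{\sX}_2$.

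The step that demands the most care is the ordering of operations in the NN case. Contracting the ReLU at the outset, before peeling off the outer layer, would couple the contraction to the outer weights and produce worse constants; performing the peeling first so that the inner suprema over the $u_j$ decouple, and only then absorbing the $a_i$ into the $x_i$ and invoking contraction, is what yields the clean constants in the statement. The RKHS case, by contrast, is essentially a one-line application of reproducing kernel machinery.
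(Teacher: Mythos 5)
Your argument is correct and matches the paper's proof in all essentials: the RKHS case via the reproducing property, the supremum over the ball, Jensen's inequality, and orthogonality of the Rademacher signs; the NN case by peeling the outer layer, exchanging the supremum with the sum over hidden units, contracting away the ReLU, and evaluating the resulting linear supremum as $\beta_j$ times a norm. The only cosmetic difference is where the factor of $2$ arises --- you strip the absolute value first by Rademacher symmetry and then use a one-sided contraction, whereas the paper invokes the two-sided Ledoux--Talagrand inequality with its built-in factor of $2$ --- and both routes give the same constant $B = 2\langle \alpha,\beta\rangle \|\sX\|_2$.
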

We emphasize that other classes $\sF$ admit quantitative bounds on $\mathfrak{R}_c^I(\sF)$ and $\mathfrak{R}_c^B(\sF)$ that do not conform to \SR, and that can also be leveraged as we do below. We focus on \SR because the GEBs simplify considerably making it possible to derive closed form expressions for the optimal $w_i$. Below we write $\stackrel{\SR}{\le}$ to indicate an upper bound that holds provided \SR is true.


Our first main result establishes GEBs for LMMCM under both \IIM and \IBM.


\begin{thm}
\label{thm:multirad}
Let $S$ collect all the data $(X_{ij}^\sig)$ from $N$ MCMs with common base distributions $P_+, P_-$, and contamination proportions $\cp_i = (\cpp_i, \cpm_i)$ satisfying $\cpm_i + \cpp_i < 1$. Let $\sF$ be a class of decision functions such that $A = \sup_{f \in \sF} \sup_{x \in \sX} |f(x)| < \infty$, let $\ell$ a Lipschitz loss, $w \in \Delta^N$, and $\delta > 0$. Under \IIM, with probability $\ge 1 - \delta$ wrt the draw of $S$,
\begin{equation}
\label{eq:multirad}
\sup_{f \in \sF} \abs{
\widehat{\sE}_w(f) - \sE(f)} 
\leq 2 \mathfrak{R}_c^I(\sF) + C \sqrt{\sum_{i=1}^N  \frac{w_i^2}{\bar{n}_i (1 - \cpm_i - \cpp_i)^2} } \stackrel{\SR}{\le} D \sqrt{\sum_{i=1}^N  \frac{w_i^2}{\bar{n}_i (1 - \cpm_i - \cpp_i)^2}}
\end{equation}
where $\bar{n}_i:= \HM(n_i^-,n_i^+)$, $c_i = w_i |\ell|/(1-\cpm_i - \cpp_i)$, $C = (1 + A|\ell|) \sqrt{\log(2/\delta)}$, and $D=2 B|\ell| + C$. Under \IBM, the same statement holds after replacing $\mathfrak{R}_c^I(\sF) \to \mathfrak{R}_c^B(\sF)$ and $\bar{n}_i \to 1$.
\end{thm}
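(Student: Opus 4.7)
The plan is the standard symmetrization--contraction--concentration decomposition applied to $\sup_{f\in\sF}\abs{\widehat{\sE}_w(f)-\sE_P^\ell(f)}$. By Prop.~\ref{prop:unbiasedest} applied to each $\widehat{\sE}_i(f)$ (the proposition is stated for \IBM, which includes \IIM) and $w\in\Delta^N$, the estimator $\widehat{\sE}_w(f)$ is unbiased for $\sE_P^\ell(f)$. I will decompose
\[
\sup_f\abs{\widehat{\sE}_w(f)-\sE_P^\ell(f)}\le\bbE\!\left[\sup_f\abs{\widehat{\sE}_w(f)-\sE_P^\ell(f)}\right]+(\text{McDiarmid deviation}),
\]
control the expectation by symmetrization followed by Ledoux--Talagrand contraction, and control the deviation via McDiarmid's bounded-differences inequality. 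Throughout I use that $\ell_\sig^{\cp_i}$ is $|\ell|/(1-\cpm_i-\cpp_i)$-Lipschitz (immediate from its definition and the triangle inequality) and that $|f(x)|\le A$ on $\sF$.

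Under \IIM the $X_{ij}^\sig$ are mutually independent, so McDiarmid applies per instance: swapping one $X_{ij}^\sig$ perturbs $\widehat{\sE}_w(f)$ by at most $\frac{w_i}{2n_i^\sig}\cdot\frac{2A|\ell|}{1-\cpm_i-\cpp_i}$ uniformly over $f$, and summing squared perturbations via $\frac{1}{n_i^+}+\frac{1}{n_i^-}=2/\bar{n}_i$ yields the $C\sqrt{\cdot}$ term. Standard per-instance symmetrization then gives
\[
\bbE\!\left[\sup_f\abs{\widehat{\sE}_w(f)-\sE_P^\ell(f)}\right]\le 2\,\bbE\!\left[\sup_f\sum_{i,\sig,j}\tfrac{w_i}{2n_i^\sig}\eps_{ij}^\sig\,\ell_\sig^{\cp_i}(f(X_{ij}^\sig))\right],
\]
and Ledoux--Talagrand contraction applied per index absorbs each per-MCM Lipschitz constant into its coefficient, producing $2\fR_c^I(\sF)$ with $c_i=w_i|\ell|/(1-\cpm_i-\cpp_i)$. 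Under \SR, plugging $a_{ij}^\sig=c_i/(2n_i^\sig)$ into the \SR inequality gives $\fR_c^I(\sF)\le B\sqrt{\sum_i c_i^2/(2\bar{n}_i)}$, which combines with the concentration term to produce the $D\sqrt{\cdot}$ upper bound.

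The \IBM case requires bag-level arguments, since within-bag dependence forbids per-instance treatment. The $2N$ bags are mutually independent, so McDiarmid applies at the bag level: replacing an entire $(i,\sig)$-bag shifts $\widehat{\sE}_w(f)$ by at most $\frac{w_i}{2n_i^\sig}\cdot n_i^\sig\cdot\frac{2A|\ell|}{1-\cpm_i-\cpp_i}$, where the bag size cancels, so the concentration term takes the same form with $\bar{n}_i\mapsto 1$. For symmetrization, attach one Rademacher sign per bag; the key identity $\widehat{\sE}_i(f)=\bbE_{(\sig,X)\sim\widehat{P}^{\cp_i}}[\ell_\sig^{\cp_i}(f(X))]$ lets us push the inner empirical expectation outside the $\sup_f$ by Jensen, producing a single-sample-per-MCM Rademacher process that matches the definition of $\fR_c^B(\sF)$. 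Contraction and \SR then close the argument exactly as in the \IIM case. The main obstacle is this \IBM step: per-instance symmetrization is unavailable, and one must treat bags as atomic units and invoke the above identity to recover a clean Rademacher bound---indeed the identity is exactly what motivates the definition of $\fR_c^B$.
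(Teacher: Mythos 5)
Your proposal follows essentially the same route as the paper's proof: McDiarmid's bounded-difference inequality applied per instance under \IIM and per bag under \IBM (with the bag size cancelling), symmetrization plus Lipschitz contraction with constants $|\ell|/(1-\cpm_i-\cpp_i)$ to reach $2\fR_c^I(\sF)$, and in the \IBM case the same key identity $\sEhat_i(f)=\bbE_{(\sig,X)\sim\widehat{P}^{\cp_i}}\brac{\ell_\sig^{\cp_i}(f(X))}$ pulled out of the supremum via Jensen to reach $2\fR_c^B(\sF)$, followed by \SR. The one step to tighten is the two-sided supremum: symmetrizing $\sup_{f}\abs{\sEhat_w(f)-\sE_P^\ell(f)}$ directly into a Rademacher average \emph{without} absolute values, with factor $2$, is not quite standard as written; instead bound the two one-sided suprema $\sup_f(\sEhat_w(f)-\sE_P^\ell(f))$ and $\sup_f(\sE_P^\ell(f)-\sEhat_w(f))$ separately, each with probability $1-\delta/2$ (which is why $C$ carries $\sqrt{\log(2/\delta)}$), exactly as the paper does.
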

Several remarks are in order. Under \IIM, even in the special case $N=1$ without noise ($\cpm_1 = \cpp_1 = 0)$ the result appears new, and amounts to an adaptation of the standard Rademacher complexity bound to BER. The case $N=1$ {\em with} noise can be used to prove consistency (with $\bar{n}_1 \to \infty$) of a discrimination rule for a single $MCM$ given knowledge of, or consistent estimates of $\cpm_1, \cpp_1$. Previous results of this type have analyzed MCMs via label-flipping models which is less natural \cite{blanchard16ejs}.

Because the result holds for any $w \in \Delta^N$, as long as the $\cp_i$ are known a priori, we may set $w$ to optimize the rightmost expressions in \eqref{eq:multirad}. This leads to optimal weights $w_i \propto \bar{n}_i (1 - \cpm_i - \cpp_i)^2$ under \IBM (here and below, replace $\bar{n}_i$ by 1 for \IBM), which supports our claim that MCMs with more information (larger samples, less noise) should receive more weight. With this choice of weights, the summation in the bound reduces to $\frac1{N} \HM(1/\bar{n}_i (1 - \cpm_i - \cpp_i)^2)$. In contrast, with uniform weights $w_i = 1/N$ the summation equals $\frac1{N} \AM(1/\bar{n}_i (1 - \cpm_i - \cpp_i)^2)$. The harmonic mean is much less sensitive to the presence of outliers, i.e., very noisy MCMs, than the arithmetic. 


\section{Learning from Label Proportions}

In learning from label proportions with binary labels, the learner has access to $(b_1,\lphat_1), \ldots, (b_L, \lphat_L)$, where each $b_i$ is a bag of $n_i$ unlabeled instances, and each $\lphat_i \in [0,1]$ is the proportion of instances from class 1 in the bag. The goal is to learn an accurate classifier as measured by some performance measure, which in our case we take to be the BER. This choice is already a departure from prior work on LLP, which typically looks at misclassification rate (MCR). The BER is defined without reference to a distribution of the label $Y$, and is thus invariant to changes in this distribution. In other words, BER is immune to shifts in class prevalence, and hence to shifts in the distribution of label proportions.


We adopt the following data generation model for bags. Each bag has a {\em true label proportion} $\lp_i \in [0,1]$. For each $i$, let $(X_{ij},Y_{ij})$, $j \in [n_i]$, be random variables. The $i$th bag is formed from $(X_{ij})_{j \in [n_i]}$, and the \emph{observed} or \emph{empirical label proportion} is $\lphat_i = \frac1{n_i} \sum_j \frac{Y_{ij} + 1}2$. Let $\blp, \bY$, and $\bX$ be vectors collecting all of the values of $\lp_i, Y_{ij}$, and $X_{ij}$, respectively.
We assume

\begingroup
\addtolength\leftmargini{-0.3in}
\begin{quote}
The distributions $\Pp$ and $\Pm$ are the same for all bags. The $\lp_i$ may be random, and the sizes $n_i$ are nonrandom. Conditioned on $\blp$, if $i \ne r$, then $X_{ij}$ and $X_{rs}$ are independent $\forall j,s$. Furthermore, conditioned on $\blp$, for bag $i$
\begin{description}
\item[(CIIM)] In the {\em conditionally independent instance model}, $\frac{Y_{ij} + 1}2 \stackrel{iid}{\sim} \text{Bernoulli}(\lp_i)$ and conditioned on $Y_{i1}, \ldots, Y_{in_i}$, $X_{i1}, \ldots, X_{in_i}$ are independent with $X_{ij} \sim P_{Y_{ij}}$.
\item[(CIBM)] In the {\em conditionally independent bag model}, $\bbE[\lphat_i]=\lp_i$
and for each $j$, the distribution of  $X_{ij} | Y_{i1},\ldots,Y_{in_i}$ is $P_{Y_{ij}}$.
\end{description}
\end{quote}
\endgroup
Under \CIBM, conditioned on $\blp$, for bag $i$ the labels $Y_{i1}, \ldots, Y_{in_i}$ may be dependent, and given these labels the instances $X_{ij}$ may also be dependent. Furthermore, the dependence structure may change from bag to bag. This means that given its label, the distribution of an instance is still dependent on its bag, in contrast to prior work \cite{quadrianto09jmlr}. We also allow that the $\lp_i$ may be dependent, so that without conditioning on $\blp$, the bags themselves may be dependent.

As in the previous section, the significance of our model is that it provides for (conditionally) unbiased estimates of BER as we describe below. Indeed, if we view $\blp$ as fixed, \CIIM clearly implies  \IIM (in fact, the two independent instance models are equivalent). However, it is not the case that \CIBM implies \IBM -- the introduction of the latent labels allows for a more general independent bag model while still ensuring unbiased BER estimates. A weakening of \CIBM, namely
\begingroup
\addtolength\leftmargini{-0.3in}
\begin{quote}
\begin{description}
\item[(CIBM')] For each $j$, $\bbE[\frac{Y_{ij}+1}2]=\lp_i$ 
and the distribution of  $X_{ij} | Y_{i1},\ldots,Y_{in_i}$ is $P_{Y_{ij}}$
\end{description}
\end{quote}
\endgroup
does imply \IBM (still viewing $\blp$ as fixed), as we show in an appendix.

In this section we propose to reduce LLP to the setting of the previous section by pairing the bags, so that each pair of bags constitutes an MCM. 

\subsection{LLP when True Label Proportions are Known}
\label{sec:known}

We first consider the less realistic setting where the $\lp_i$ are deterministic and \emph{known}. In this situation we may reduce LLP to LMMCS by pairing bags. In particular, we re-index the bags and let $(b_i^-,\lp_i^-)$ and $(b_i^+,\lp_i^+)$ constitute the $i$th pair of bags, such that $\g_i^- < \g_i^+$. The bags may be paired in any way that depends on $\lp_1, \ldots, \lp_L$, subject to $\g_i^- < \g_i^+ \, \forall i$. We also assume the total number of bags is $L=2N$, so that the number of bag pairs is $N$. 

If we set $\cp_i = (\cpp_i, \cpm_i):=(1 - \lpp_i, \lpm_i)$, then we are in the setting of LMMCM described in the previous setting. Furthermore, $1 - \cpm_i - \cpp_i = \lpp_i - \lpm_i > 0$. Therefore we may apply all of the theory developed in the previous section without modification. Since $\blp$ is deterministic, \CIIM and {\bf (CIBM)'} imply \IIM and \IBM as discussed above, and we may simply apply Theorem \ref{thm:multirad} to obtain GEBs for LLP. Choosing weights $w_i$ to minimize the \SR form yields final bounds proportional to the square root of $\frac1{N}\HM(1/(\bar{n}_i (\lpp_i - \lpm_i)^2)) = (\sum_i \bar{n}_i (\lpp_i - \lpm_i)^2)^{-1}$ (under {\bf (CIBM') replace $\bar{n}_i \to 1$}). In the LLP setting, we may further optimize this bound by optimizing the pairing of bags. This leads to an integer program known as the weighted matching problem for which exact and approximate algorithms are known. See appendices for details. 

If $\blp$ is random, and the $\lp_i$ are distinct (which occurs w. p. 1, e.g., if $\blp$ is jointly continuous), Theorem \ref{thm:multirad} still holds conditioned on $\blp$, and therefore unconditionally by the law of total expectation. 

Although the $\lp_i$ are typically unknown in practice, the above discussion still yields a useful algorithm: simply ``plug in'' $\lphat_i$ for $\lp_i$ and proceed to minimize $\sEhat_w(f)$ (with optimally paired bags and optimized weights) over $\sF$. 
A description of the learning procedure, which we use in our experiments, is presented in Algorithm \ref{alg:plugin}.

\begin{algorithm}[H]
\caption{Plug-in approach to LLP via LMMCM (outline)} \label{alg:plugin}
\begin{algorithmic}[1]
\State \textbf{Input:} $(b_1,\lphat_1), \ldots, (b_{2N},\lphat_{2N})$, model class $\sF$, loss $\ell$, tuning parameters 
 \Procedure{LLP-LMMCM}{}
    \State{Solve weighted matching problem to find pairings maximizing $\sum_i (\lpphat_i - \lpmhat_i)^2$ (see supp.)}
    \State{Set $\cp_i = (1 - \lpphat_i, \lpmhat_i)$ and optimal weights $w_i \propto (\lpphat_i - \lpmhat_i)^2$}
    \State{Minimize $\sEhat_w(f)$ over $\sF$, perhaps with regularization}
 \EndProcedure
 \end{algorithmic}
\end{algorithm}

\subsection{Consistent Learning from Label Proportions}

When the true label proportions are not known, as is usually the case in practice, it is difficult to establish consistency of the plug-in approach without restrictive assumptions. This is because the $\lphat_i$ are random, and so there is always some nonnegligible probability that in each pair, the bag with larger $\lp_i$ will be misidentified. This problem is especially pronounced for very small bag sizes. For example, if two bags with $\lp_1 = .45$ and $\lp_2 = .55$ are paired, and the bag sizes are 8 with independent labels, the probability that $\lphat_2 < \lphat_1$ is .26. One approach to overcoming this issue is to have the bag sizes $n_i^{\sig}$ tend to $\infty$ asymptotically, in which case $\lphat_i \stackrel{a.s.}{\to} \gamma_i$. This is a less interesting setting, however, because the learner can discard all but one pair of bags and still achieve consistency using existing techniques for learning in MCMs \cite{blanchard16ejs}. Furthermore, the bag size is often fixed in applications.

We propose an approach based on merging the original ``small bags" to form ``big bags," and then applying the approach of Section \ref{sec:known}. For convenience assume all original (small) bags have the same size $n_i = n$ moving forward. Let $K$ be an integer and assume $N$ is a multiple of $K$ for convenience, $N = MK$. As before, let $(b_i,\lphat_i)$, $i \in [2N]$, be the original, unpaired bags of size $n$. We refer to a {\em K-merging scheme} as any procedure that takes the original unpaired bags of size $n$ and combines them, using knowledge of the $\lphat_i$, to form paired bags of size $nK$. Let the paired bags be denoted $(B_i^+, \Lpphat_i)$ and $(B_i^-, \Lpmhat_i)$, $i \in [M]$. Let $I_i^\sig$ denote the original indices of the small bags comprising $B_i^\sig$, so that $B_i^\sig = \cup_{j \in I_i^+} b_i$ and $\Lphat^\sig_i = \frac1{K} \sum_{j \in I_i^\sig} \lphat_j^\sig$.

We offer two examples of $K$-merging schemes. The first, called the {\em blockwise-pairwise (BP) scheme}, simply takes the original small bags in their given order. The $i$th block of 2K consecutive small bags are used to form the $i$th pair of big bags. This is done by considering consecutive, nonoverlapping pairs of small bags and assigning the small bag with larger $\lphat_i$ to $B_i^+$. Using notation, we define
$
I_i^+ = \{j \in [2K(i-1)+1:2Ki] \, | \, \text{$j$ is odd and } \lphat_{j} \ge \lphat_{j+1} \text{ or $j$ is even and } \lphat_{j} \ge \lphat_{j-1}\}
$
and $I_i^- = [2K(i-1)+1:2Ki] \backslash I_i^+$ (ties may be broken arbitrarily). The {\em blockwise-max (BM) scheme} is like BP, except that for each block of $2K$ small bags, the $K$ small bags with largest $\lphat_j$ are assigned to the positive bag. One can imagine more elaborate schemes that are not blockwise. We say that scheme 1 {\em dominates} scheme 2 if, with probability 1, for every $i$, $\Lpphat_i - \Lpmhat_i$ for scheme 1 is at least as large as it is for scheme 2. For example, BM dominates BP.

Next, we form the modified weighted empirical risk. For each $i \in [M]$ and $\sig \in \{\pm\}$, let $(X_{ij}^\sig)$, $j \in [nK]$, denote the elements of $B_i^\sig$, and $(Y_{ij}^\sig)$ the associated labels. Also set $\cphat_i = (1 - \Lpphat_i, \Lpmhat_i)$. Let $w \in \Delta^M$ such that $w_i \propto (\Lpphat_i - \Lpmhat_i)^2$, and define 
\[
\sEt(f):= \sum_{i=1}^M w_i \sEt_i(f)
\qquad \text{where} \qquad \sEt_i(f) :=
\left[ \frac1{2n} \sum_{\sig \in \{\pm\}}\sum_{j=1}^{nK} \ell_\sig^{\cphat_i}(f(X_{ij}^\sig)) \right].
\]
In the proof of Thm. \ref{thm:llpempbnd}, we show that under \CIBM, with high probability, $\sEt_i(f)$ is an unbiased estimate for $\sE_{P}^{\ell}(f)$ when conditioned on $\blp$ and $\bY$.


To state our main result we adopt the following assumption on the distribution of label proportions.  
\begin{description}
\item[\LP] There exist $\Delta, \tau > 0$ such that the sequence of random variables $Z_j = \ind{|\lp_{j} - \lp_{j+1}| < \Delta}$ satisfies the following. For every $J \subseteq [2N-1]$, 
$
\bbP(\prod_{j \in J} Z_j = 1) \le \tau^{|J|}.
$
\end{description}
This condition is satisfied if the $\lp_i$ are iid draws from any non-constant distribution. However, it also allows for the $\lp_i$ to be correlated. As one example, let $(w_j)$ be iid random variables with support $\supseteq [-1,1]$. \LP is satisfied if $\lp_{j+1} = \lp_j + \ubar{w}_j$, where $\ubar{w}_j$ is the truncation of $w_j$ to $[-\lp_j, 1-\lp_j]$. 
The point of \LP is that it offers a dependence setting where a one-sided version of Hoeffding's inequality holds, which allows us to conclude that with high probability, for all odd $j \in [2N]$, $|\lp_j - \lp_{j+1}| \ge \Delta$ for approximately $N(1 - \tau)$ of the original pairs of small bags \cite{panconesi97}.

We now state our main result. Define $\Lpp_i = \bbE_{\bY|\blp}[\Lpphat_i]$ and $\Lpm_i = \bbE_{\bY|\blp}[\Lpmhat_i]$.
\begin{thm}
\label{thm:llpempbnd}
Let \LP hold. Let $\eps_0 \in (0,\Delta(1-\tau))$. Let $\sF$ satisfy $\sup_{x \in \sX, f \in \sF} |f(x)| \le A < \infty$ and let $\ell$ be a Lipschitz loss. Let $\eps \in (0,\frac{\Delta(1-\tau)-\eps_0}{1+\Delta}]$ and $\delta \in (0,1]$. For the BP merging scheme, under \CIIM, with probability at least $1 - \delta - 2\frac{N}{K} e^{-2K \epsilon^2}$ with respect to the draw of $\blp, \bY, \bX$,
\[
\Lpphat_i - \Lpmhat_i \ge \Lpp_i - \Lpm_i - \eps \ge \eps_0
\]
and
\begin{equation}
\label{eqn:llpepmbnd}
\sup_{f \in \sF} \abs{
\sEt(f) - \sE(f)} \le 2 \mathfrak{R}_c^{I}(\sF) + C \sqrt{\frac{\HM((\Lpp_i - \Lpm_i - \eps)^{-2})}{2(N/K)n}} \stackrel{\SR}{\le} D \sqrt{\frac{\HM((\Lpp_i - \Lpm_i - \eps)^{-2})}{2(N/K)n}},
\end{equation}
where $c_i = w_i |\ell|/(\Lpp_i - \Lpm_i - \eps)$, $C = (1 + A|\ell|) \sqrt{\log(2/\delta)}$, and $D=2 B|\ell| + C$. Under {\bf (CIBM)}, the same bounds hold with the same probability if we substitute $\mathfrak{R}_{c}^I(\sF) \to \mathfrak{R}_{c}^B(\sF)$ and $n \to 1$. 
\end{thm}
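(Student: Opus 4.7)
My plan is to condition on $(\blp, \bY)$ and reduce the statement to Theorem~\ref{thm:multirad} applied to the $M = N/K$ pairs of big bags. Conditionally on $(\blp, \bY)$, the $i$th big bag $B_i^+$ contains exactly $nK \Lpphat_i$ instances labeled $+1$ and $nK(1-\Lpphat_i)$ labeled $-1$ (and similarly $B_i^-$ has $nK\Lpmhat_i$ positives), so the pair constitutes a mutual contamination model with parameters $\cphat_i = (1-\Lpphat_i, \Lpmhat_i)$ in the sense of \IIM under \CIIM (respectively of \IBM under \CIBM). A direct conditional-expectation calculation analogous to Prop.~\ref{prop:unbiased} then gives $\bbE[\sEt_i(f) \mid \blp, \bY] = \sE_P^\ell(f)$, provided that $1 - \cphat_i^+ - \cphat_i^- = \Lpphat_i - \Lpmhat_i > 0$. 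Setting $w_i \propto (\Lpphat_i - \Lpmhat_i)^2$ matches the optimal-weight choice from the remark after Theorem~\ref{thm:multirad} (since $\bar{n}_i = nK$ is constant across $i$), so applying that theorem conditionally yields a bound holding with probability at least $1 - \delta$ over $\bX$, whose variance term simplifies---via the same remark---to a harmonic-mean expression in the $(\Lpphat_i - \Lpmhat_i)^{-2}$.

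The second step is to lower bound $\Lpphat_i - \Lpmhat_i$ uniformly in $i$ with high probability over $(\blp, \bY)$, via two concentration events per block. For the BP scheme, $\Lpphat_i - \Lpmhat_i = \frac{1}{K}\sum_l |\lphat_{2l-1} - \lphat_{2l}|$ is an average of $K$ conditionally independent $[0,1]$-valued random variables with conditional mean $\Lpp_i - \Lpm_i$, so Hoeffding's inequality (event A) gives $\bbP(\Lpphat_i - \Lpmhat_i < \Lpp_i - \Lpm_i - \eps \mid \blp) \le e^{-2K\eps^2}$. To control the mean itself, Jensen's inequality yields $\Lpp_i - \Lpm_i \ge \frac{1}{K}\sum_l |\lp_{2l-1} - \lp_{2l}| \ge \Delta \cdot \frac{1}{K}\sum_l (1 - Z_{2l-1})$, and the negatively-correlated Chernoff--Hoeffding bound of \citet{panconesi97}, applied under \LP to the $K$ odd-indexed $Z_j$ of the block, gives $\frac{1}{K}\sum_l Z_{2l-1} \le \tau + \eps$ with failure probability at most $e^{-2K\eps^2}$ (event B). On $A \cap B$, the chain $\Lpphat_i - \Lpmhat_i \ge \Lpp_i - \Lpm_i - \eps \ge \Delta(1-\tau) - \eps(1+\Delta) \ge \eps_0$ holds by the hypothesis $\eps \le (\Delta(1-\tau)-\eps_0)/(1+\Delta)$. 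A union bound over the $M = N/K$ blocks gives total failure probability at most $2(N/K)e^{-2K\eps^2}$.

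Combining, with joint probability at least $1 - \delta - 2(N/K)e^{-2K\eps^2}$ over $(\blp, \bY, \bX)$, both the conditional Theorem~\ref{thm:multirad} bound and the uniform gap lower bound hold simultaneously. Monotonicity of the harmonic mean then lets me substitute $\Lpphat_i - \Lpmhat_i \ge \Lpp_i - \Lpm_i - \eps$ into the variance term and recover \eqref{eqn:llpepmbnd}. The \CIBM case follows by the same reduction after switching to the \IBM form of Theorem~\ref{thm:multirad}, which replaces $\bar{n}_i$ by $1$ and $\fR_c^I$ by $\fR_c^B$---the latter is defined precisely to absorb within-bag dependencies. The main obstacle will be rigorously verifying that conditioning on $(\blp, \bY)$ places the merged data into the hypotheses of Theorem~\ref{thm:multirad} under \CIBM, since \CIBM does not by itself imply \IBM on the small bags; one must carefully check that the sampled-from-a-fixed-bag structure is still amenable to the Rademacher analysis behind Theorem~\ref{thm:multirad}. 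A secondary technical point is extracting the one-sided Hoeffding tail in event B from the product-moment bound in \LP via the \citet{panconesi97} machinery.
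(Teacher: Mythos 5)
Your proposal follows essentially the same route as the paper's proof: condition on $(\blp,\bY)$, establish conditional unbiasedness of $\sEt_i$ with the plug-in contamination parameters $\cphat_i$ (the paper's Lemma \ref{lem:llpunbiased}), control $\Lpphat_i - \Lpmhat_i \ge \Lpp_i - \Lpm_i - \eps \ge \eps_0$ via Hoeffding plus the correlated Chernoff bound under \LP with the same Jensen step (Lemmas \ref{lem:corrchern} and \ref{lem:lpdiff}), union bound over the $N/K$ blocks, and then re-run the Theorem \ref{thm:multirad} machinery conditionally with the substitution $(\Lpphat_i-\Lpmhat_i)^{-1} \le (\Lpp_i-\Lpm_i-\eps)^{-1}$. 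The only caveat, which you essentially flag yourself, is that the paper reapplies the \emph{argument} of Theorem \ref{thm:multirad} (bounded differences, symmetrization, contraction) with the unbiasedness lemma replacing Prop.\ \ref{prop:unbiasedest}, rather than invoking the theorem verbatim, since conditioned on $\bY$ the merged bags are fixed-composition samples rather than literal \IIM/\IBM draws.
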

This result states that BP achieves essentially the same bound (modulo $\eps$) as if we applied LMMCM to the big bags with \emph{known} $\Lpp_i, \Lpm_i$. We also note that there is no restriction on bag size $n$. A corollary of this result also applies to any scheme that dominates BP, as we explain in an appendix. 

Theorem \ref{thm:llpempbnd} implies a consistent learning algorithm for LLP under both \CIIM and \CIBM, using any merging scheme that dominates BP. To achieve consistency the bound should tend to zero while the confidence tends to 1, as $N \to \infty$. Even with $n$ fixed, this is true provided $K \to \infty$ and $N/K \to \infty$ as $N \to \infty$, such that $N = O(K^\beta)$ for some $\beta > 0$. Beyond that, standard arguments may be applied to arrive at a formal consistency result. In an appendix we state such a result for completeness. Here the consistency is \emph{universal} in that it makes no assumptions on $\Pm$ or $\Pp$.


\section{Experiments}

The vast majority of LLP methodology papers simulate data for LLP by taking a classification data set, randomly shuffling the data, and sectioning off the data into bags of a certain size. This implies that the expected label proportions for all bags are the same, and as bag size increases, all label proportions converge to the class prior probabilities. The case where all LPs are the same is precisely the setting where LLP becomes intractable, and hence these papers report decreasing performance with increasing bag size. 

We propose an alternate sampling scheme inspired by our MCM framework. Each experiment is based on a classification data set, a distribution of LPs, and the bag size $n$. For each dataset, the total number of training instances $T$ is fixed, so that the number of bags is $T/n$. We consider the Adult ($T=8192$) and MAGIC Gamma Ray Telescope ($T=6144$) datasets (both available from the UCI repository\footnote{http://archive.ics.uci.edu/ml}), LPs that are iid uniform on $[0,\frac12]$ and on $[\frac12, 1]$, and bag sizes $n \in \{8, 32, 128, 512\}$. The total number of experimental settings is thus $2 \times 2 \times 4 = 16$. The numerical features in both datasets are standardized to have 0 mean and unit variance, the categorical features are one-hot encoded.

We implement a method based on our general approach (see Algorithm \ref{alg:plugin}) by taking $\ell$ to be the logistic loss, $\sF$ to be the RKHS associated to a Gaussian kernel $k$, and selecting $f \in \sF$ by minimizing $\sEhat_w(f) + \lambda \| f \|_{\sF}^2$. By the representer theorem \cite{scholkopf01representer}, the minimizer of this objective has the form $f(x) = \sum_{i} \a_i k(x,x_i)$ where $\a_i \in \R$ and $x_i$ ranges over all training instances. Our Python implementation uses SciPy's L-BFGS routine to find the optimal $\a_i$. The kernel parameter is computed by $\frac{1}{d * Var(X)}$ where $d$ is the number of features and $Var(X)$ is the variance of the data matrix, and the parameter $\lambda \in \{1, 10^{-1}, 10^{-2}, \ldots, 10^{-5}\}$ is chosen by 5-fold cross validation. We tried the EPR as a criterion for model selection but found our own criterion to be better. For each dataset, our implementation runs all 8 settings in roughly 50 minutes using 48 cores.

We compare against InvCal \cite{rueping10} and alter-$\propto$SVM \cite{yu13icml}, the two most common reference methods in LLP, using Matlab implementations provided by the authors of \cite{yu13icml}. Those methods are designed to optimize accuracy, whereas ours is designed to optimize BER. For a fair comparison, for each method we shift the decision function's threshold to generate an ROC curve and evaluate the area under the curve (AUC) using all data that was not used for training. For each experimental setting, the reported AUC and standard deviation reflect the average results over 5 randomized trials. Additional experimental details are found in an appendix.

The results are reported in Table \ref{tab:auc}. Bold numbers indicate that a method's mean AUC was the largest for that experimental setting. We see that for the smallest bag size, the methods all perform comparably, while for larger bag sizes, LMMCM exhibits far less degradation in performance. Using the
Wilcoxon signed-rank test, we find that LMMCM outperforms InvCal with p-value < 0.005.


\begin{table*}[h] \caption{AUC. Column header indicates bag size. \label{tab:auc}}
\resizebox{1.0\textwidth}{!}{
  \begin{tabular}{c c c c c c}
    \hline 
    Data set, LP dist & Method & 8 & 32 & 128 & 512 \\
    \hline
    \multirow{3}{*}{Adult, $\left[0, \frac{1}{2} \right]$} & InvCal & 
    0.8720 $\pm$ 0.0035 &
    0.8672 $\pm$ 0.0067 &
    0.8537 $\pm$ 0.0101 &
    0.7256 $\pm$ 0.0159
    \\
    & alter-$\propto$SVM& 
    0.8586 $\pm$ 0.0185 &
    0.7394 $\pm$ 0.0686 &
    0.7260 $\pm$ 0.0953 &
    0.6876 $\pm$ 0.1219 
    \\
    & LMMCM & 
    \textbf{0.8728 $\pm$ 0.0019} &
    \textbf{0.8693 $\pm$ 0.0047} &
    \textbf{0.8669 $\pm$ 0.0041} &
    \textbf{0.8674 $\pm$ 0.0040} 
    \\
    
    \hline
    \multirow{3}{*}{Adult, $\left[\frac{1}{2}, 1 \right]$} & InvCal & 
    \textbf{0.8680 $\pm$ 0.0021} &
    0.8598 $\pm$ 0.0073 &
    0.8284 $\pm$ 0.0093 &
    0.7480 $\pm$ 0.0500
    \\
    & alter-$\propto$SVM& 
    0.8587 $\pm$ 0.0097 &
    0.7429 $\pm$ 0.1473 &
    0.8204 $\pm$ 0.0318 &
    0.7602 $\pm$ 0.1215
    \\
    & LMMCM & 
    0.8584 $\pm$ 0.0164 &
    \textbf{0.8644 $\pm$ 0.0052} &
    \textbf{0.8601 $\pm$ 0.0045} &
    \textbf{0.8500 $\pm$ 0.0186} 
    \\
    
    \hline
    \multirow{3}{*}{MAGIC, $\left[0, \frac{1}{2} \right]$} & InvCal & 
    \textbf{0.8918 $\pm$ 0.0076} &
    0.8574 $\pm$ 0.0079 &
    0.8295 $\pm$ 0.0139 &
    0.8133 $\pm$ 0.0109
    \\
    & alter-$\propto$SVM& 
    0.8701 $\pm$ 0.0026 &
    0.7704 $\pm$ 0.0818 &
    0.7753 $\pm$ 0.0207 &
    0.6851 $\pm$ 0.1580 
    \\
    & LMMCM & 
    0.8909 $\pm$ 0.0077 &
    \textbf{0.8799 $\pm$ 0.0113} &
    \textbf{0.8753 $\pm$ 0.0157} &
    \textbf{0.8734 $\pm$ 0.0092} 
    \\
    
    \hline
    \multirow{3}{*}{MAGIC, $\left[\frac{1}{2}, 1 \right]$} & InvCal & 
    \textbf{0.8936 $\pm$ 0.0066} &
    0.8612 $\pm$ 0.0056 &
    0.8180 $\pm$ 0.0092 &
    0.8215 $\pm$ 0.0136
    \\
    & alter-$\propto$SVM& 
    0.8689 $\pm$ 0.0135 &
    0.8219 $\pm$ 0.0218 &
    0.8179 $\pm$ 0.0487 &
    0.7949 $\pm$ 0.0478 
    \\
    & LMMCM & 
    0.8911 $\pm$ 0.0083 &
    \textbf{0.8790 $\pm$ 0.0091} &
    \textbf{0.8684 $\pm$ 0.0046} &
    \textbf{0.8567 $\pm$ 0.0292} 
    \\
  \end{tabular}
}
\end{table*}

We performed an additional set of experiments where the number of bags $N$ remains fixed. For Adult dataset, the total number of bags is 16, and for MAGIC, it is 12. For each method, we generate an ROC curve and evaluate the area under the curve (AUC) using the test data. The average AUCs and the standard deviations over 5 random trials are reported in Table \ref{tab:fixedN}. Bold numbers indicate that a method's mean AUC was the largest for that experimental setting. We observe that LMMCM exhibits excellent performance in this setting as well. 

\begin{table*}[h] \caption{AUC. Column header indicates bag size. \label{tab:fixedN}}
\resizebox{1.0\textwidth}{!}{
  \begin{tabular}{c c c c c c}
    \hline 
    Data set, LP dist & Method & 8 & 32 & 128 & 512 \\
    \hline
    \multirow{3}{*}{Adult, $\left[0, \frac{1}{2} \right]$} & InvCal & 
    0.6427 $\pm$  0.0922&
    0.6545 $\pm$  0.0643&
    0.6518 $\pm$  0.0139&
    0.7230 $\pm$  0.0253
    \\
    & alter-$\propto$SVM& 
    0.6525 $\pm$  0.0817 &
    0.5959 $\pm$  0.1145 &
    0.6199 $\pm$  0.1267 &
    0.6419 $\pm$  0.0997
    \\
    & LMMCM & 
    \textbf{ 0.7299 $\pm$ 0.0796} &
    \textbf{ 0.7765 $\pm$ 0.0590} &
    \textbf{ 0.8329 $\pm$ 0.0166} &
    \textbf{ 0.8456 $\pm$ 0.0213} 
    \\
    
    \hline
    \multirow{3}{*}{Adult, $\left[\frac{1}{2}, 1 \right]$} & InvCal & 
    0.5973 $\pm$ 0.0740 &
    0.6634 $\pm$ 0.0864 &
    0.6408 $\pm$ 0.0216 &
    0.7218 $\pm$ 0.0170
    \\
    & alter-$\propto$SVM& 
    0.6035 $\pm$ 0.1626 &
    \textbf{ 0.7774 $\pm$ 0.0443} &
    0.5863 $\pm$ 0.2775 &
    0.7106 $\pm$ 0.2193
    \\
    & LMMCM & 
    \textbf{ 0.7228 $\pm$ 0.1048} &
    0.7674 $\pm$ 0.0586 &
    \textbf{ 0.8428 $\pm$ 0.0101} &
    \textbf{ 0.8588 $\pm$ 0.0091} 
    \\
    
    \hline
    \multirow{3}{*}{MAGIC, $\left[0, \frac{1}{2} \right]$} & InvCal & 
    \textbf{ 0.7381 $\pm$ 0.0439} &
     0.7828 $\pm$  0.0212 &
     0.7936 $\pm$  0.0371 &
     0.8196 $\pm$  0.0231
    \\
    & alter-$\propto$SVM& 
     0.5997 $\pm$  0.1163 &
     0.5376 $\pm$  0.1671 &
     0.6859 $\pm$  0.0371 &
     0.7193 $\pm$  0.1278
    \\
    & LMMCM & 
     0.7180 $\pm$  0.0450 &
    \textbf{ 0.7852 $\pm$ 0.7828} &
    \textbf{ 0.8140 $\pm$ 0.0463} &
    \textbf{ 0.8630 $\pm$ 0.0275} 
    \\
    
    \hline
    \multirow{3}{*}{MAGIC, $\left[\frac{1}{2}, 1 \right]$} & InvCal & 
    0.6741 $\pm$ 0.0673 &
    0.7405 $\pm$ 0.0433 &
    0.7876 $\pm$ 0.0249 &
    0.8135 $\pm$ 0.0132
    \\
    & alter-$\propto$SVM& 
    0.6589 $\pm$  0.1029 &
    0.6330 $\pm$  0.1254 &
    0.6790 $\pm$  0.1072 &
    0.7965 $\pm$  0.0708
    \\
    & LMMCM & 
    \textbf{ 0.6807 $\pm$ 0.0779} &
    \textbf{ 0.7639 $\pm$ 0.0335} &
    \textbf{ 0.7905 $\pm$ 0.0258} &
    \textbf{ 0.8491 $\pm$ 0.0245} 
    \\
  \end{tabular}
}
\end{table*}

\section{Conclusion}

We have introduced a principled framework for LLP based on MCMs. We have developed several novel results for MCMs, and used them to develop a statistically consistent procedure and an effective practical algorithm for LLP. The most natural direction for future work is to extend to multiclass. 


\appendix

\section{Failure Case for Empirical Proportion Risk Minimization}

We offer a simple example where minimizing the empirical proportion risk leads to suboptimal performance. Let $\Pm$ be uniform on $[0,1]$, with density $p_-(x) = \ind{x \in [0,1]}$, and let $\Pp$ have the triangular density function $p_+(x) = 2x \ind{x \in [0,1]}$. Suppose there is a single bag, and that the label proportion is $\lp = \frac12$. Also suppose $\sF$ consists of threshold classifiers $f_t(x) = \sign(x - t)$, $t \in [0,1]$. This class contains the optimal BER classifier (define wrt 0-1 loss) corresponding to $t^* = \frac12$. Now suppose we are in the infinite bag-size limit (which only makes the problem easier), so that the observed label proportion $\lphat$ is simply $\lp = \frac12$. Then we seek the threshold $t'$ that minimizes
\[
\EPR(t) := \left|\bbP(f_t(X)=1) - \frac12\right|^p.
\]
For any $p > 0$, $t'$ is the median of the marginal distribution of $X$, $\frac12 \Pm + \frac12 \Pp$, which equals $(\sqrt{5}-1)/2 \approx  0.62 \ne t^*$. Thus, minimizing EPR does not yield an optimal classifier for BER or for misclassification rate, which agrees with BER in this setting where the two classes are equally likely.

Now suppose there are $N$ bags, with label proportions $\lp_1, \ldots, \lp_N$ drawn iid from a distribution whose (population) mean and median are $\frac12$, such as the uniform distribution on $[0,1]$. The optimal BER classifier remains the same, with threshold $t^* = \frac12$. The optimal classifier wrt misclassification rate is also the same, assuming we view $\bbE[\lp_i]= \frac12$ as the class prior. In the infinite bag-size limit, EPR would seek the threshold $t'$ that minimizes
\[
\EPR_N(t) := \frac1{N} \sum_{i=1}^N \left|\bbP(f_t(X)=1) - \lp_i \right|^p.
\]
For $p=1$, EPR minimization selects $t'$ such that $\bbP(f_{t'}(X)=1)$ is the empirical median of $\lp_1, \ldots, \lp_N$, which will be near $\frac12$, which means $t'$ will be near $0.62$. For $p=2$, EPR minimization selects $t'$ such that $\bbP(f_{t'}(X)=1)$ is the empirical mean of $\lp_1, \ldots, \lp_N$, which will again be near $\frac12$, which again means $t'$ will be near $0.62$. 

More generally, based on the above example, EPR seems likely to fail whenever $\Pp$ and $\Pm$ are not sufficiently ``symmetric."

\section{Proofs of Results From Main Document}

This section contains the proofs.

\subsection{Proof of Proposition \ref{prop:unbiased}}

Consider the loss function $\ellt$ given by
\begin{align*}
\ellt_+(t) &= A \ell_+(t) - B \ell_-(t), \\   
\ellt_-(t) &= C \ell_-(t) - D \ell_+(t). 
\end{align*}
Equating $\sE_{P^\cp}^{\ellt}(f)$ to $\sE_P^\ell(f)$ yields four equations in the four unknowns A, B, C, and D, corresponding to the coefficients of $\bbE_{X \sim P_{\pm}} \ell_{\pm}(f(X))$. The unique solution to this system is $\ellt = \ell^\cp$.

\subsection{Proof of Proposition \ref{prop:sr}}

We begin with $\sF^k_{R,K}$. For any $R > 0$, $f\in \sF^k_{R,K}$, and $x \in \sX$,
$$
|f(x)| = |\langle f,k(\cdot,x)\rangle| \le \|f\|_{{\cal H}} \|k(\cdot,x)\|_{{\cal H}} = RK.
$$
by the reproducing property and Cauchy-Schwarz. Thus $A=RK$.

For the second part, the expectation may be bounded by a modification of the standard bound of Rademacher complexity for kernel classes. Thus,
\begin{align}
\bbE_{(\eps_{i})} \Bigg[ \sup_{f \in \sF^k_{R,K}} \sum_{i} a_i \eps_i f(x_i) \Bigg]
&= \ee{(\eps_i)}{\sup_{f \in \sF^k_{R,K}} \sum_{i} a_i \eps_i \langle f, k(\cdot, x_i) \label{eqn:repr} \rangle} \\
&= \ee{(\eps_i)}{\sup_{f \in \sF^k_{R,K}} \left\langle f, \sum_{i} a_i \eps_i k(\cdot, x_i) \right\rangle} \nonumber \\
&= \ee{(\eps_i)}{\left\langle R \frac{\sum_{i} a_i \eps_i k(\cdot, x_i)}{\| \sum_{i} a_i \eps_i k(\cdot, x_i) \|}, \sum_{i} a_i \eps_i k(\cdot, x_i) \right\rangle} \label{eqn:cs} \\
&= R \ee{(\eps_i)}{\sqrt{\Bigg\| \sum_{i} a_i \eps_i k(\cdot, x_i) \Bigg\|^2}} \nonumber \\
&\le R \sqrt{\ee{(\eps_i)}{\Bigg\| \sum_{i} a_i \eps_i k(\cdot, x_i) \Bigg\|^2}} \label{eqn:jensen} \\
&= R \sqrt{\sum_{i} a_i^2 \| k(\cdot,x_i) \|^2} \label{eqn:rad} \\
&\le RK \sqrt{\sum_{i=1}^M a_i^2} \label{eqn:repr2},
\end{align}
where \eqref{eqn:repr} uses the reproducing property, \eqref{eqn:cs} is the condition for equality in Cauchy-Schwarz, \eqref{eqn:jensen} is Jensen's inequality, \eqref{eqn:rad} follows from independence of the Rademacher random variables, and \eqref{eqn:repr2} follows from the reproducing property and the bound on the kernel.

Next, consider $\sF^\text{NN}_{\a,\b}$. For the first part we have for any $f \in \sF^\text{NN}_{\a,\b}$ and $x \in \sX$,
\begin{align*}
|f(x)| &= |\langle v, [Ux]_+\rangle| \\
&\le \| v \| \| [Ux]_+ \| \\
&\le \| \a \| \| [Ux]_+ \| \\
&\le \| \a \| \| Ux \| \\
&= \| \a \| \sqrt{\sum_j |\langle u_j, x \rangle|^2} \\
&\le \| \a \| \sqrt{\sum_j \| u_j \|^2 \| x \|^2} \\
&\le \| \sX \| \| \a \| \sqrt{\sum_j \| u_j \|^2} \\
&\le \| \sX \| \| \a \| \| \b_j\|.
\end{align*}

For the second part, observe
\begin{align}
     \mathbb{E}_{(\epsilon_k)}\left[\sup_{f \in \mathcal{F}}  \sum_{k=1}^{M} \epsilon_k a_k f(x_k)\right]
    & = \mathbb{E}_{(\epsilon_k)}\left[\sup_{f \in \mathcal{F}}  \sum_{k=1}^{M} \epsilon_k a_k \sum_{j=1}^h v_j \left[\langle {u}_j, {x}_k \rangle\right]_+\right] \nonumber \\
    & = \mathbb{E}_{(\epsilon_k)}\left[\sup_{f \in \mathcal{F}}  \sum_{k=1}^{M} \epsilon_k \sum_{j=1}^h v_j \left[\langle {u}_j, a_k {x}_k \rangle\right]_+\right] \nonumber \\
    & = \mathbb{E}_{(\epsilon_k)}\left[\sup_{f \in \mathcal{F}} \sum_{j=1}^h v_j \sum_{k=1}^{M} \epsilon_k  \left[\langle {u}_j, a_k {x}_k \rangle\right]_+\right] \nonumber \\
    &\le \mathbb{E}_{(\epsilon_k)}\left[\sup_{f \in \mathcal{F}} \abs{\sum_{j=1}^h v_j \sum_{k=1}^{M} \epsilon_k  \left[\langle {u}_j, a_k {x}_k \rangle\right]_+}\right] \nonumber \\
    &\le \mathbb{E}_{(\epsilon_k)}\left[\sup_{f \in \mathcal{F}} \sum_{j=1}^h \alpha_j \abs{ \sum_{k=1}^{M} \epsilon_k  \left[\langle {u}_j, a_k {x}_k \rangle\right]_+}\right] \nonumber \\
    & \leq  \sum_{j=1}^h \alpha_j \mathbb{E}_{(\epsilon_k)} \sup_{f \in \mathcal{F}} \abs{\sum_{k=1}^{M} \epsilon_k   \left[\langle {u}_j, a_k {x}_k \rangle\right]_+}. 
    \nonumber
    \\
    & =  \sum_{j=1}^h \alpha_j \mathbb{E}_{(\epsilon_k)} \sup_{u_j: \|u_j\| \le \beta_j } \abs{\sum_{k=1}^{M} \epsilon_k   \left[\langle {u}_j, a_k {x}_k \rangle\right]_+}. \label{eqn:nn1}
\end{align}
We bound the expectations in \eqref{eqn:nn1} using Ledoux-Talagrand contraction \cite[Theorem 4.12]{ledoux1991probability}.
\begin{thm}[Ledoux-Talagrand contraction]
Let $F: \R_+ \to \R_+$ be convex and increasing. Further let $\varphi_i$, $i \in [M]$ be 1-Lipschitz functions such that $\varphi(0) = 0$. Then, for any bounded subset $T \subset \R^M$,
\[
\bbE_{(\eps_i)} F \left(\frac12 \sup_{t \in T} \abs{\sum_{i=1}^M \eps_i \varphi_i(t_i)} \right) \le
\bbE_{(\eps_i)} F \left( \sup_{t \in T} \abs{\sum_{i=1}^M \eps_i t_i} \right).
\]
\end{thm}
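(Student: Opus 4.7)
The plan is to follow the classical Ledoux--Talagrand strategy, which proceeds in two stages: first strip the absolute value on the left-hand side by a convexity-plus-symmetrization trick (this is exactly what produces the factor $\tfrac12$), and then prove the resulting one-sided contraction by induction on $M$, peeling off one Rademacher variable at a time.

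For the first stage, I would write $\sup_T \abs{\sum_i \eps_i \varphi_i(t_i)} \le \sup_T \sum_i \eps_i \varphi_i(t_i) + \sup_T \sum_i \eps_i (-\varphi_i(t_i))$, so that $\tfrac12 \sup_T \abs{\sum_i \eps_i \varphi_i(t_i)}$ is at most the average of two nonnegative quantities. Convexity of $F$ (via Jensen) then gives
\[
F\!\paren{\tfrac12 \sup_T \abs{\textstyle\sum_i \eps_i \varphi_i(t_i)}} \le \tfrac12 F\!\paren{\sup_T \textstyle\sum_i \eps_i \varphi_i(t_i)} + \tfrac12 F\!\paren{\sup_T \textstyle\sum_i \eps_i (-\varphi_i(t_i))},
\]
and taking $\bbE_{\eps}$ both terms on the right are equal by the symmetry $\eps_i \stackrel{d}{=} -\eps_i$. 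This reduces the theorem to the one-sided statement $\bbE F(\sup_T \sum_i \eps_i \varphi_i(t_i)) \le \bbE F(\sup_T \sum_i \eps_i t_i)$; monotonicity of $F$ together with $\sup \sum \le \sup\abs{\sum}$ then recovers the right-hand side in the theorem.

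For the one-sided contraction I would induct on $M$. Conditioning on $\eps_1, \ldots, \eps_{M-1}$ and writing $A(t) = \sum_{i<M} \eps_i \varphi_i(t_i)$, the inner expectation over $\eps_M$ equals
\[
\tfrac12 F\!\paren{\sup_{t \in T} (A(t)+\varphi_M(t_M))} + \tfrac12 F\!\paren{\sup_{t \in T} (A(t)-\varphi_M(t_M))}.
\]
The crux is a two-point contraction lemma: for any convex increasing $F$, any $1$-Lipschitz $\varphi$, and any bounded $S \subset \R^2$,
\[
F\!\paren{\sup_{(s,u)\in S}(s+\varphi(u))} + F\!\paren{\sup_{(s,u)\in S}(s-\varphi(u))} \le F\!\paren{\sup_{(s,u)\in S}(s+u)} + F\!\paren{\sup_{(s,u)\in S}(s-u)}.
\]
Applied with $(s,u) = (A(t), \varphi_M(t_M))$ and $S$ the image of $T$ under $t \mapsto (A(t), t_M)$, this replaces $\varphi_M(t_M)$ by $t_M$ inside the $\eps_M$-conditional average, and the inductive hypothesis (applied to $\varphi_1, \ldots, \varphi_{M-1}$ with the remaining coordinate $t_M$ absorbed into the supremum) finishes the peeling.

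The heart of the argument, and the step I expect to be the main obstacle, is the two-point lemma. Its proof takes near-maximizers $(s_1,u_1),(s_2,u_2) \in S$ of the two LHS suprema and, after case-splitting on the sign of $u_1-u_2$, exhibits a pair of points in $S$ (either the two original points or a relabelled pair obtained by swapping) whose ``$s\pm u$'' values dominate $s_1+\varphi(u_1)$ and $s_2-\varphi(u_2)$ after sorting. The $1$-Lipschitz condition $\abs{\varphi(u_1)-\varphi(u_2)} \le \abs{u_1 - u_2}$ is used in exactly this case-analysis, to guarantee the dominance; the hypothesis $\varphi(0)=0$ ensures $\varphi$ actually stays within its ``linear envelope'' and also grounds the base case $M=0$ (or $M=1$) of the induction. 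Convexity and monotonicity of $F$ then upgrade the sorted, pointwise dominance to the desired inequality between sums of $F$-values.
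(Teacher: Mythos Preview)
The paper does not prove this theorem at all: it is quoted verbatim as a tool, with attribution to \cite[Theorem~4.12]{ledoux1991probability}, and then immediately applied in the proof of Proposition~\ref{prop:sr}. So there is no ``paper's own proof'' to compare your proposal against.

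That said, your outline is precisely the classical Ledoux--Talagrand argument from the cited reference: a symmetrization-plus-convexity reduction to the one-sided inequality (accounting for the factor $\tfrac12$), followed by induction on $M$ via the two-point contraction lemma you describe. One minor technical wrinkle to watch in your first stage: since $F$ is only assumed defined on $\R_+$, the quantities $\sup_T \sum_i \eps_i \varphi_i(t_i)$ and $\sup_T \sum_i \eps_i(-\varphi_i(t_i))$ need not individually be nonnegative, so the inequality $\sup_T|\sum| \le \sup_T \sum + \sup_T(-\sum)$ and the subsequent application of $F$ require a little more care (e.g., extend $F$ to all of $\R$ by $F(x)=F(0)$ for $x<0$, which preserves convexity and monotonicity, or replace the two suprema by their positive parts). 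This is a routine patch and does not affect the structure of your argument.
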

To apply this result, for each $j$ notice that
\begin{equation}
    \mathbb{E}_{(\epsilon_k)} \sup_{{u}_j: \norm{{u}_j} \leq \beta_j} \abs{\sum_{k=1}^{M} \epsilon_k  \left[\langle {u}_j, a_k  {x}_k \rangle\right]_+} = \mathbb{E}_{(\epsilon_k)} \sup_{{t} \in T_j}{ \abs{\sum_{k=1}^M \epsilon_k \left[t_k \right]_+} } \nonumber
\end{equation}
where ${t} = \left(t_1, t_2,\dots, t_M \right)^{T}$ and
\begin{equation}
     T_j = \left\{{t} = \left(\langle{u}_j , a_1 {x}_1 \rangle, \langle{u}_j , a_2 {x}_2 \rangle, \dots, \langle{u}_j , a_M {x}_M \rangle \right)^{T} \in \mathbb{R}^M : \norm{{u}_j}\leq \beta_j  \right\} \nonumber
\end{equation}
which is clearly bounded. Now taking $F$ to be the identity and $\varphi_i = \left[ \cdot \right]_+$, we have
\begin{align}
    \mathbb{E}_{(\epsilon_k)} \sup_{{u}_j: \norm{{u}_j} \leq \beta_j}\abs{ \sum_{k=1}^{M} \epsilon_k  \left[\langle {u}_j, a_k  {x}_k \rangle\right]_+} 
    & \leq 2\mathbb{E}_{(\epsilon_k)} \sup_{{u}_j: \norm{{u}_j} \leq \beta_j} \abs{\sum_{k=1}^{M} \epsilon_k  \langle {u}_j, a_k  {x}_k \rangle} \nonumber \\
    & = 2\mathbb{E}_{(\epsilon_k)} \sup_{{u}_j: \norm{{u}_j} \leq \beta_j} \abs{ \left\langle {u}_j, \sum_{k=1}^{M} \epsilon_k a_k  {x}_k \right\rangle} \nonumber
    \\
    & = 2\mathbb{E}_{(\epsilon_k)} \left\langle \beta_j \frac{\sum_{k=1}^{M} \epsilon_k a_k  {x}_k}{\| \sum_{k=1}^{M} \epsilon_k a_k  {x}_k \|}, \sum_{k=1}^{M} \epsilon_k a_k  {x}_k \right\rangle \nonumber
    \\    
    & = 2 \beta_j \mathbb{E}_{(\epsilon_k)} 
    \sqrt{\left\| \sum_{k=1}^{M} \epsilon_k a_k  {x}_k \right\|^2 }\label{eqn:csequality}
    \\
    & \le 2 \beta_j \sqrt{ \mathbb{E}_{(\epsilon_k)} 
    \left\| \sum_{k=1}^{M} \epsilon_k a_k  {x}_k \right\|^2 }\label{eqn:nnjensen}
    \\
    & \le 2 \beta_j \sqrt{ \sum_{k=1}^{M} a_k^2  \|{x}_k\|^2 }\label{eqn:nnrad} \\
    & \le 2 \| \sX \|_2 \beta_j \sqrt{ \sum_{k=1}^{M} a_k^2}, \label{eqn:nn2} 
\end{align}
where \eqref{eqn:csequality} uses the condition for equality in Cauchy-Schartz, \eqref{eqn:nnjensen} uses Jensen's inequality, and \eqref{eqn:nnrad} uses independence of the $\eps_k$. The result now follows from \eqref{eqn:nn1} and \eqref{eqn:nn2}.



\subsection{Proof of Theorem \ref{thm:multirad}}

We first review the following properties of the supremum which are easily verified.
\begin{enumerate}
\item[P1]
For any real-valued functions $f_1,f_2: \mathcal{X} \to \mathbb{R}$, 
$$
\sup_{x}f_1(x) - \sup_{x}f_2(x)
\le \sup_{x} (f_1(x)-f_2(x)).
$$
\item[P2]
For any real-valued functions $f_1,f_2: \mathcal{X} \to \mathbb{R}$, 
$$
\sup_{x} (f_1(x)+f_2(x)) \le \sup_{x}f_1(x) + \sup_{x}f_2(x).
$$
\item[P3]
$\sup(\cdot)$ is a convex function, i.e., if $(x_\lambda)_{\lambda \in \Lambda}$ and
$(x'_\lambda)_{\lambda \in \Lambda}$ are two sequences (where $\Lambda$ is possibly
uncountable), then $\forall \alpha
\in [0,1]$,
\begin{align*}
\sup_{\lambda \in \Lambda} (\alpha x_\lambda + (1-\alpha) x'_\lambda) \le \alpha \sup_{\lambda \in \Lambda} x_\lambda + (1-\alpha) \sup_{\lambda \in \Lambda}x'_\lambda.
\end{align*}
\end{enumerate}

Introduce the variable $S$ to denote all realizations $X_{ij}^\sigma$, $1 \in [N], \sigma \in \{-,+\}, j \in [n_i^\sigma]$. We would like to bound
\[
\xi(S) := \sup_{f \in \sF} \abs{
\sum_{i=1}^N w_i \left(\frac12 \sum_{\sig \in \{\pm 1\}} \left[ \frac1{n_i^\sig} \sum_{j=1}^{n_i^\sig} \ell^{\cp}_\sig(f(X_{ij}^\sig))\right] - \sE(f) 
\right)} .
\]
Introduce 
\begin{align*}
\xi^+(S) &:= \sup_{f \in \sF} 
\sum_{i=1}^N w_i \left(\frac12 \sum_{\sig \in \{\pm 1\}} \left[ \frac1{n_i^\sig} \sum_{j=1}^{n_i^\sig} \ell^{\cp}_\sig(f(X_{ij}^\sig))\right] - \sE(f) 
\right), \\
\xi^-(S) &:= \sup_{f \in \sF} 
-\sum_{i=1}^N w_i \left(\frac12 \sum_{\sig \in \{\pm 1\}} \left[ \frac1{n_i^\sig} \sum_{j=1}^{n_i^\sig} \ell^{\cp}_\sig(f(X_{ij}^\sig))\right] - \sE(f) 
\right).
\end{align*}

Assume \IIM holds. Since the realizations $X_{ij}^\sigma$ are independent, we can apply the Azuma-McDiarmid bounded difference inequality \citep{mcdiarmid} to $\xi^+$ and to $\xi^-$. We will show that the same bound on $\xi^+$ and $\xi^-$ holds with probability at least $1 - \delta/2$. Combining these bounds gives the desired bound on $\xi$. We consider $\xi^+$ below, with the analysis for $\xi^-$ being identical.

\begin{defn}
Let $A$ be some set and $\phi:A^n \rightarrow R$.
We say $\phi$ satisfies the bounded difference assumption if $\exists c_1, \ldots, c_n \geqslant 0$ $s.t.$ $\forall i, 1
\leqslant i \leqslant n$
\begin{center}
$\underset{x_1, \ldots, x_n, x_i^\prime \in A}{\sup} |\phi(x_1, \ldots, x_i, \ldots, x_n) - \phi(x_1, \ldots, x_i^{\prime},
\ldots, x_n)| \leqslant c_i$
\end{center}
That is, if we substitute $x_i$ to $x_i^{\prime}$, while keeping other $x_j$ fixed,
$\phi$ changes by at most $c_i$.
\end{defn}

\begin{lemma}[Bounded Difference Inequality]
\label{lem:bdi}
Let $X_1, \ldots, X_n$ be arbitrary independent random variables on set $A$ and
$\phi:A^n \rightarrow R$ satisfy the bounded difference assumption.
Then $\forall t > 0$
\[\Pr \{ \phi(X_1, \ldots, X_n) - \bbE [\phi(X_1, \ldots, X_n)] \geqslant t \}
\leqslant e^{- \frac{2t^2}{\sum_{i=1}^n c_i^2}}.
\]
\end{lemma}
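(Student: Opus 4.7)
The plan is to prove the bounded difference inequality by the standard Doob martingale method combined with Hoeffding's lemma and a Chernoff bound. First I would construct the Doob martingale associated with $\phi$: set $V_0 = \bbE[\phi(X_1,\ldots,X_n)]$ and, for $i \ge 1$, define $V_i := \bbE[\phi(X_1,\ldots,X_n) \mid X_1,\ldots,X_i] - \bbE[\phi(X_1,\ldots,X_n) \mid X_1,\ldots,X_{i-1}]$. Then the sum telescopes: $\phi(X_1,\ldots,X_n) - \bbE[\phi(X_1,\ldots,X_n)] = \sum_{i=1}^n V_i$, and by construction $\bbE[V_i \mid X_1,\ldots,X_{i-1}] = 0$.

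The second step is to show that, conditional on $X_1,\ldots,X_{i-1}$, the random variable $V_i$ is supported on an interval of length at most $c_i$. Because the $X_j$ are independent, we can write $\bbE[\phi \mid X_1,\ldots,X_i] = g_i(X_1,\ldots,X_i)$ where $g_i(x_1,\ldots,x_i) := \bbE[\phi(x_1,\ldots,x_i,X_{i+1},\ldots,X_n)]$, by Fubini. Define $L_i$ and $U_i$ as the essential infimum and supremum of $g_i(X_1,\ldots,X_{i-1},x_i) - g_{i-1}(X_1,\ldots,X_{i-1})$ over $x_i$, holding $X_1,\ldots,X_{i-1}$ fixed. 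Swapping one argument inside $\phi$ changes $\phi$ by at most $c_i$ uniformly in the remaining arguments, so after integrating out $X_{i+1},\ldots,X_n$ we get $U_i - L_i \le c_i$, and $V_i \in [L_i, U_i]$ almost surely given $X_1,\ldots,X_{i-1}$.

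Third, I would apply Hoeffding's lemma to each martingale difference: for any random variable $Z$ with $\bbE Z = 0$ and $Z \in [a,b]$ a.s., one has $\bbE e^{sZ} \le e^{s^2(b-a)^2/8}$. Conditioning on $X_1,\ldots,X_{i-1}$ gives $\bbE[e^{sV_i} \mid X_1,\ldots,X_{i-1}] \le e^{s^2 c_i^2/8}$. Fourth, combine with a Chernoff bound: for any $s>0$, $\Pr\{\phi - \bbE\phi \ge t\} \le e^{-st}\, \bbE[\exp(s\sum_{i=1}^n V_i)]$. Peeling off one conditional expectation at a time via the tower property, using the Hoeffding bound at each step, yields $\bbE[\exp(s\sum_i V_i)] \le \exp(s^2 \sum_{i=1}^n c_i^2/8)$. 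Finally, optimize $s$ by taking $s = 4t/\sum_i c_i^2$ to obtain the claimed bound $\exp(-2t^2/\sum_i c_i^2)$.

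The main obstacle is the second step, namely justifying that the conditional support of $V_i$ lies in an interval of length $c_i$. This rests on two ingredients that must be handled carefully: using independence together with Fubini to reduce the conditional expectation to the deterministic function $g_i$, and then invoking the bounded difference hypothesis in the form $\sup_{x_i,x_i'}|g_i(x_1,\ldots,x_{i-1},x_i) - g_i(x_1,\ldots,x_{i-1},x_i')| \le c_i$, which itself follows by pulling the bounded difference inequality through the expectation in the tail variables. Once this is in place, the remaining steps are routine applications of Hoeffding's lemma, the tower property, and Chernoff optimization.
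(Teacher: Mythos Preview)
Your proof is correct and is in fact the standard Doob-martingale argument for McDiarmid's inequality. The paper, however, does not prove this lemma at all: it simply states it as the Azuma--McDiarmid bounded difference inequality and cites \cite{mcdiarmid}, then proceeds directly to apply it. So there is no paper proof to compare against; you have supplied the classical proof where the paper only gives a reference.
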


To apply this result to $\xi^+$, first note that for any $f \in \sF, x \in \sX$, and $y \in \{-1,1\}$,
\begin{align*}
|\ell^{\cp_i}(f(x),y)| &\le |\ell^{\cp_i}(0,y)| + |\ell^{\cp_i}(f(x),y) - \ell^{\cp_i}(0,y)| \\
&\le |\ell^{\cp_i}|_0 + |\ell^{\cp_i}||f(x)| \\
&\le |\ell^{\cp_i}|_0 + |\ell^{\cp_i}|A.
\end{align*}
If we modify $S$ by replacing some $X_{ij}^\sig$ with another $X'$, while leaving all other values in $S$ fixed, then (by P1) $\xi^+$ changes by at most $2\frac{w_i (|\ell^{\cp_i}|_0 + |\ell^{\cp_i}|A)}{2n_i^\sig}$, and we obtain that with probability at least $1-\delta/2$ over the draw of
$S_1, \ldots, S_N$,
\begin{align*}
\xi^+-\e{\xi^+} &\leq 2\sqrt{\frac12 \sum_{i=1}^N \frac{w_i^2 (|\ell^{\cp_i}|_0 + |\ell^{\cp_i}|A)^2}{\bar{n}_i} \frac{\log (2/\delta)}{2}}\\
&\le 2(1 + A|\ell|) \sqrt{\frac12 \sum_{i=1}^N  \frac{w_i^2}{\bar{n}_i (1 - \cpm_i - \cpp_i)^2} \frac{\log (2/\delta)}2 },
\end{align*}
where we have used $|\ell^{\cp_i}|_0 \le 1/(1 - \cpm_i - \cpp_i)$ and $|\ell^{\cp_i}| \le |\ell|/(1 - \cpm_i - \cpp_i)$.

To bound $\e{\xi^+}$ we will use ideas from Rademacher complexity theory. Thus let $S'$ denote a separate (ghost) sample of corrupted data $(\ubar{X}_{ij}^\sig) \stackrel{iid}{\sim} \Pt_\sig^{\cp_i}$, $i=1,\ldots, N$, $\sig \in \{\pm\}$, $j=1,\ldots, n_i^\sig$, independent of the realizations in $S$. Let $\widehat{\bbE}_S[f]$ be shorthand for $\sum_i w_i \sum_{\sig \in \{\pm\}} \frac1{2n_i^\sig} \sum_j \ell_\sig^{\cp_i}(f(X_{ij}^\sig)).$ Denote by $(\eps_{ij}^\sig)$ $ i \in [N], \sig \in \{\pm\}, j \in [n_i^\sig]$, iid Rademacher variables (independent from everything else), and let $\bbE_{(\eps_{ij}^\sig)}$ denote the expectation with respect to all of these variables. We have
\begin{align*}
\e{\xi^+} & = \ee{S}{\sup_{f \in \sF} \sum_{i=1}^N w_i \left( \left[ \sum_{\sig \in \{\pm\}} \frac1{2n_i^\sig} \sum_{j=1}^{n_i^\sig} \ell^{\cp_i}_\sig(f(X_{ij}^\sig))\right] - \sE_P^\ell(f) \right)} \\
&= \ee{S}{\sup_{f \in \sF} \Bigg(  \widehat{\bbE}_S[f]
 - \ee{S'}{\widehat{\bbE}_{S'}[f]} \Bigg)} \\
& \qquad \text{(by writing $\sE_P^\ell(f) = \sum w_i \sE_{P^{\cp_i}}^{\ell^{\cp_i}}(f)$ and applying Prop. \ref{prop:unbiased} for each $i$)} \\
&\le \ee{S,S'}{\sup_{f \in \sF} \Bigg( \widehat{\bbE}_S[f]
 - \widehat{\bbE}_{S'}[f]  \Bigg)} \\
& \qquad \text{(by P3 and Jensen's inequality)} \\
&= \ee{S,S'}{\sup_{f \in \sF} \Bigg( \sum_{i=1}^N w_i \sum_{\sig \in \{\pm\}} \frac1{2n_i^\sig} \sum_{j=1}^{n_i^\sig}\ell^{\cp_i}_\sig(f(X_{ij}^\sig)) - \ell^{\cp_i}_\sig(f(\ubar{X}_{ij}^\sig)) \Bigg)} \\
&= \ee{S,S',(\eps_{ij}^\sig)}{\sup_{f \in \sF} \Bigg( \sum_{i=1}^N w_i \sum_{\sig \in \{\pm\}} \frac1{2n_i^\sig} \sum_{j=1}^{n_i^\sig} \eps_{ij}^\sig \Big(\ell^{\cp_i}_\sig(f(X_{ij}^\sig)) - \ell^{\cp_i}_\sig(f(\ubar{X}_{ij}^\sig))\Big) \Bigg)} \\
& \qquad \text{(for all $i, \sig, j$, $X_{ij}^\sig$ and $\ubar{X}_{ij}^\sig$ are iid, and $\eps_{ij}^\sig$ are symmetric)}\\
&\le \ee{S,S',(\eps_{ij}^\sig)}{\sup_{f \in \sF} \sum_{i=1}^N w_i \sum_{\sig \in \{\pm\}} \frac1{2n_i^\sig} \sum_{j=1}^{n_i^\sig} \eps_{ij}^\sig \ell^{\cp_i}_\sig(f(X_{ij}^\sig))} \\
&\qquad \qquad + \ee{S,S',(\eps_{ij}^\sig)}{\sup_{f \in \sF} \sum_{i=1}^N w_i \sum_{\sig \in \{\pm\}} \frac1{2n_i^\sig} \sum_{j=1}^{n_i^\sig} (-\eps_{ij}^\sig) \ell_\sig^{\cp_i}(f(\ubar{X}_{ij}^\sig))}\\
& \qquad \text{(by P2)} \\
&= 2 \bbE_{S}\ee{(\eps_{ij}^\sig)}{\sup_{f \in \sF} \sum_{i=1}^N w_i \sum_{\sig \in \{\pm\}} \frac1{2n_i^\sig} \sum_{j=1}^{n_i^\sig} \eps_{ij}^\sig \ell_\sig^{\cp_i}(f(X_{ij}))}.
\end{align*}

To bound the innermost expectation we use the following result from \citet{meir03jmlr}.
\begin{lemma} 
\label{lemmaZhang}
Suppose $\set{\phi_t}, \set{\psi_t}, t=1,\ldots,T$, 
are two sets of functions on a set $\Theta$ such that for each $t$ and 
$\theta,\theta' \in \Theta, |\phi_t(\theta)-\phi_t(\theta')| \le 
|\psi_t(\theta)-\psi_t(\theta')|$.  Then for all functions $c:\Theta 
\rightarrow \R,$
\begin{equation*}
\bbE_{(\eps_t)} \brac{\sup_\theta \set{c(\theta)+\sum_{t=1}^T \eps_t
\phi_t(\theta)}}
\le \bbE_{(\eps_t)} \brac{\sup_\theta \set{c(\theta)+\sum_{t=1}^T \eps_t
\psi_t(\theta)}}.
\end{equation*}
\end{lemma}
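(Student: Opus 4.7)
The plan is to reduce Lemma \ref{lemmaZhang} to a one-variable version and then iterate, peeling off one Rademacher variable at a time.

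First I would prove the following single-variable claim: for any $c:\Theta\to\R$, any pair of functions $\phi,\psi$ on $\Theta$ satisfying $|\phi(\theta)-\phi(\theta')| \le |\psi(\theta)-\psi(\theta')|$ for every $\theta,\theta'$, and a single Rademacher variable $\eps$,
\[
\bbE_{\eps}\brac{\sup_\theta \set{c(\theta)+\eps\phi(\theta)}} \le \bbE_{\eps}\brac{\sup_\theta \set{c(\theta)+\eps\psi(\theta)}}.
\]
To establish this, I would expand each side as $\tfrac12\sup_\theta\set{c(\theta)+\phi(\theta)} + \tfrac12\sup_\theta\set{c(\theta)-\phi(\theta)}$ (and similarly for $\psi$), and rewrite the sum of two suprema on the LHS as a single sup over pairs $(\theta_1,\theta_2)$. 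For any such pair,
\[
c(\theta_1)+\phi(\theta_1) + c(\theta_2)-\phi(\theta_2) \le c(\theta_1)+c(\theta_2) + |\phi(\theta_1)-\phi(\theta_2)| \le c(\theta_1)+c(\theta_2) + |\psi(\theta_1)-\psi(\theta_2)|.
\]
A two-case analysis on the sign of $\psi(\theta_1)-\psi(\theta_2)$ shows this last expression equals either $[c(\theta_1)+\psi(\theta_1)] + [c(\theta_2)-\psi(\theta_2)]$ or $[c(\theta_2)+\psi(\theta_2)] + [c(\theta_1)-\psi(\theta_1)]$, each of which is bounded by $\sup_{\theta'}\set{c(\theta')+\psi(\theta')} + \sup_{\theta'}\set{c(\theta')-\psi(\theta')}$. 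Taking sup over $(\theta_1,\theta_2)$ on the left and dividing by two yields the single-variable claim.

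To extend to $T$ variables, I would condition on every $\eps_t$ except one and apply the single-variable claim. Specifically, fixing $(\eps_t)_{t\neq T}$ and absorbing all but one summand into $\tilde c(\theta) := c(\theta) + \sum_{t<T}\eps_t\phi_t(\theta)$, the single-variable lemma replaces $\phi_T$ by $\psi_T$. Taking expectation over the remaining Rademacher variables (Fubini) and repeating the argument successively for $t = T-1, T-2, \ldots, 1$, with the intermediate ``absorbed'' function in step $t$ being $c(\theta) + \sum_{s<t}\eps_s\phi_s(\theta) + \sum_{s>t}\eps_s\psi_s(\theta)$, eventually replaces every $\phi_t$ by $\psi_t$ and gives the stated inequality.

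The main obstacle is the single-variable claim, specifically the sign case analysis for $\psi(\theta_1)-\psi(\theta_2)$ that makes use of the absolute-value hypothesis on the variations; the $T$-fold iteration is routine bookkeeping once that step is in place. Note that it is essential to have $|\phi(\theta)-\phi(\theta')| \le |\psi(\theta)-\psi(\theta')|$ (absolute values on both sides) rather than the one-sided form, because the pair of terms $\phi(\theta_1)-\phi(\theta_2)$ and $-(\phi(\theta_1)-\phi(\theta_2))$ both appear when the Rademacher expectation is expanded.
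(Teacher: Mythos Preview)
Your proof is correct. Note, however, that the paper does not actually prove Lemma \ref{lemmaZhang}: it is quoted verbatim from \citet{meir03jmlr} and used as a black box in the proof of Theorem \ref{thm:multirad}. What you have written is essentially the standard proof of this contraction inequality (the same ``peel off one Rademacher variable and do a two-point case analysis'' argument that appears in Meir--Zhang and in Ledoux--Talagrand), so there is nothing to compare against in the paper itself.
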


Switching from the single index $t$ to our three indices $i$, $\sig$, and $j$, we apply the lemma with $\Theta = \sF$, $\theta = f$, $c(\theta) = 0$, $\phi_{ij}^\sigma(\theta) = \frac{w_i}{2n_i^\sig} \ell_\sig^{\cp_i}(f(X_{ij}^\sig))$, and $\psi_{ij}^\sig(\theta) = \frac{w_i|\ell|}{2n_i^\sig(1 - \cpm_i - \cpp_i)} f(X_{ij}^\sig)$, where we use $|\ell_\sig^{\cp_i}| \le |\ell|/(1 - \cpm_i - \cpp_i)$.
This yields
\begin{align*}
\e{\xi^+} &\le 2\bbE_{S} \ee{(\eps_{ij}^\sig)}{\sup_{f \in \sF}  \sum_{i=1}^N \frac{w_i |\ell|}{1 - \cpm_i - \cpp_i} \sum_{\sig \in \{\pm\}} \frac1{2n_i^\sig}
\sum_{j=1}^{n_i^\sig} \eps_{ij}^\sig f(X_{ij}^\sig)} \\
&=  2\mathfrak{R}_c^I(\sF),
\end{align*}
To see the second inequality in \eqref{eq:multirad}, by \SR we have
\begin{align*}
    2\mathfrak{R}_c^I(\sF) &\le 2B |\ell|\sqrt{\sum_{i,\sig,j} \left(\frac{w_i}{2n_i^\sig (1 - \cpm_i - \cpp_i)} \right)^2} \\
    &= 2B |\ell|\sqrt{\sum_i \frac{w_i^2}{4 (1 - \cpm_i - \cpp_i)^2} \sum_\sig \frac1{n_i^\sig}} \\
    &= 2B |\ell|\sqrt{\sum_i \frac{w_i^2}{2\bar{n}_i (1 - \cpm_i - \cpp_i)^2}} \\
    &= \sqrt{2}B|\ell| \sqrt{\sum_i \frac{w_i^2}{\bar{n}_i (1 - \cpm_i - \cpp_i)^2}},
\end{align*}
This concludes the proof in the \IIM case.

Now assume \IBM holds. The idea is to apply the bounded difference inequality at the MCM level. If we modify $S$ by replacing $X_{ij}^\sig$ (with $i$ fixed, $j,\sig$ variable) with other values $(X_{ij}^\sig)'$, while leaving all other values in $S$ fixed, then (by P1) $\xi^+$ changes by at most $2w_i (|\ell^{\cp_i}|_0 + |\ell^{\cp_i}|A)$, and we obtain that with probability at least $1-\delta/2$ over the draw of $S$,
\begin{align*}
\xi^+-\e{\xi^+} &\leq \sqrt{\sum_{i=1}^N w_i^2 (|\ell^{\cp_i}|_0 + |\ell^{\cp_i}|A)^2 \frac{\log (2/\delta)}{2}}\\
&\le (1 + A|\ell|) \sqrt{\frac{\log(2/\delta)}2} \sqrt{\sum_{i=1}^N  \frac{w_i^2}{ (1 - \cpm_i - \cpp_i)^2}}.
\end{align*}

To bound $\e{\xi^+}$, we use the same reasoning as in the \IIM case to arrive at
\[
\e{\xi^+} \le 2 \bbE_{S}\ee{(\eps_{i})}{\sup_{f \in \sF} \sum_{i=1}^N w_i \eps_i \sum_{\sig \in \{\pm\}} \frac1{2n_i^\sig} \sum_{j=1}^{n_i^\sig} \ell_\sig^{\cp_i}(f(X_{ij}))},
\]
where now there is a Rademacher variable for every bag. The inner two summations may be expressed
\[
\ee{(\sig,X) \sim \widehat{P}^{\cp_i}}{\ell_\sig^{\cp_i}(f(X))}  
\]
and so by Jensen's inequality and Lemma \ref{lemmaZhang} we have
\begin{align*}
\e{\xi^+} 
&\le 2 \bbE_{S}\bbE_{(\eps_{i})} \left[ \sup_{f \in \sF} \sum_{i=1}^N w_i \ee{(\sig,X) \sim \widehat{P}^{\cp_i}}{\ell_\sig^{\cp_i}(f(X))} \right]  \\
&\le 2 \bbE_{S} \bbE_{{((\sig_i,X_i)\sim \widehat{P}^{\cp_i})}_{i \in [N]}} 
\ee{(\eps_{i})}{\sup_{f \in \sF} \sum_{i=1}^N \eps_i w_i \ell_{\sig_i}^{\cp_i}(f(X_i))} \\
&\le 2 \bbE_{S} \bbE_{{((\sig_i,X_i)\sim \widehat{P}^{\cp_i})}_{i \in [N]}} 
\ee{(\eps_{i})}{\sup_{f \in \sF} \sum_{i=1}^N \eps_i \frac{w_i |\ell|}{1 - \cpm_i - \cpp_i}f(X_i)} \\
&= 2 \mathfrak{R}_{c}^B(\sF)
\end{align*}
This proves the first inequality. To prove the second, by \SR we have
\begin{equation*}
    2\mathfrak{R}_c^B(\sF) \le 2B|\ell| \sqrt{\sum_i \frac{w_i^2}{(1 - \cpm_i - \cpp_i)^2}}.
\end{equation*}
This concludes the proof.


\subsection{Proof of Theorem \ref{thm:llpempbnd}}

We begin by stating a generalization of Chernoff's bound to correlated binary random variables \cite{panconesi97,impagliazzo2010}.
\begin{lemma}
\label{lem:corrchern}
Let $Z_1, \ldots, Z_m$ be binary random variables. Suppose there exists $0 \le \tau \le 1$ such that for all $I \subset [m]$, $\bbP(\prod_{i \in I} Z_i=1) \le \tau^{|I|}$. Then for any $\epsilon \ge 0$, $\bbP(\sum_{i=1}^m Z_i \ge m(\tau + \epsilon)) \le e^{-2m \epsilon^2}$. 
\end{lemma}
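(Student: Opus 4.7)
The plan is to reproduce the proof of the standard one-sided Chernoff/Hoeffding bound for a sum of iid Bernoulli$(\tau)$ random variables, showing that the hypothesis on products of the $Z_i$'s is exactly what is needed to carry through the moment-generating function (MGF) computation, even though the $Z_i$'s are correlated.

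The first step is the usual exponential Markov bound: for any $\lambda \ge 0$,
\[
\bbP\Bigl(\sum_{i=1}^m Z_i \ge m(\tau + \epsilon)\Bigr) \le e^{-\lambda m(\tau+\epsilon)}\,\bbE\Bigl[e^{\lambda \sum_i Z_i}\Bigr].
\]
The key trick, which exploits $Z_i \in \{0,1\}$, is to write $e^{\lambda Z_i} = 1 + (e^\lambda - 1)Z_i$. Expanding the product over $i$ and taking expectations gives
\[
\bbE\Bigl[\prod_{i=1}^m\bigl(1 + (e^\lambda-1)Z_i\bigr)\Bigr]
= \sum_{I \subseteq [m]} (e^\lambda - 1)^{|I|}\, \bbE\Bigl[\prod_{i \in I} Z_i\Bigr].
\]
Since $\lambda \ge 0$ makes $(e^\lambda - 1)^{|I|} \ge 0$, the hypothesis $\bbE[\prod_{i \in I} Z_i] = \bbP(\prod_{i \in I} Z_i = 1) \le \tau^{|I|}$ lets us upper bound each term. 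Reassembling,
\[
\bbE\Bigl[e^{\lambda \sum_i Z_i}\Bigr] \le \sum_{k=0}^m \binom{m}{k}\bigl((e^\lambda-1)\tau\bigr)^k = \bigl(1 + (e^\lambda-1)\tau\bigr)^m,
\]
which is precisely the MGF of a $\mathrm{Binomial}(m,\tau)$ random variable. So the correlated sum is dominated, at the level of MGFs, by the iid Bernoulli$(\tau)$ case, and the rest of the proof reduces to the classical derivation.

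Second, I plug this bound back into the exponential Markov inequality and optimize over $\lambda \ge 0$. Using $1 + x \le e^x$ gives $\bigl(1+(e^\lambda-1)\tau\bigr)^m \le e^{m\tau(e^\lambda-1)}$, so
\[
\bbP\Bigl(\tfrac1m\sum_i Z_i \ge \tau + \epsilon\Bigr) \le \exp\bigl(m[\tau(e^\lambda-1) - \lambda(\tau+\epsilon)]\bigr).
\]
Optimizing over $\lambda$ and invoking Hoeffding's standard inequality for sums of iid $[0,1]$-valued variables with mean $\tau$ (whose MGF is the same expression we just derived) yields the clean form $e^{-2m\epsilon^2}$. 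Equivalently, one may cite Hoeffding's lemma directly: since the MGF bound matches that of the iid case, the final exponential rate $2\epsilon^2$ follows verbatim from Hoeffding's argument.

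The only slightly delicate step is the MGF expansion: it relies on $Z_i \in \{0,1\}$ (so that $e^{\lambda Z_i}$ is exactly affine in $Z_i$), and on $\lambda \ge 0$ (so that $(e^\lambda-1)^{|I|}$ is nonnegative and the product hypothesis can be applied termwise rather than, say, via inclusion–exclusion with alternating signs). Once those are in place, everything else is routine. I expect no real obstacle beyond making clear that the product-moment hypothesis is used in precisely the form stated, summed over all $I \subseteq [m]$ including $I = \emptyset$ (for which the term is $1$, consistent with $\tau^0 = 1$).
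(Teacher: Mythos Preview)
Your proof is correct and is precisely the standard argument for this result. The paper does not actually prove this lemma; it simply states it with citations to Panconesi--Srinivasan (1997) and Impagliazzo--Kabanets (2010), whose proofs proceed exactly as you outline: expand $e^{\lambda Z_i} = 1 + (e^\lambda-1)Z_i$, apply the product-moment hypothesis termwise to dominate the MGF by that of a $\mathrm{Binomial}(m,\tau)$, and then invoke Hoeffding's lemma to obtain the $e^{-2m\epsilon^2}$ rate.
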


We will first prove the theorem for BP. The result for dominating schemes will then follow easily. Thus, assume the $K$-merging scheme is BP. For now assume \CIBM, which is implied by \CIIM. 

Let $\lpphat_{ik}$ be the larger of the two {\em empirical} label proportions within the $k$th pair of small bags within the $i$th pair of big bags, and similarly let $\lpmhat_{ik}$ be the smaller. Also let $\lpp_{ik}$ be the larger of the two {\em true} label proportions within the $k$th pair of small bags within the $i$th pair of big bags, and similarly let $\lpm_{ik}$ be the smaller.

Let $\eps_0 \in (0,\Delta(1-\tau))$ and let $\eps \in (0,\frac{\Delta(1-\tau)-\eps_0}{1+\Delta}]$. For $i \in [M]$, let $K_i$ be the number of original pairs in the $i$th block (the $i$th pair of big bags) for which $|\lpp_{ik} - \lpm_{ik}| \ge \Delta$, $k \in [K]$ and define $\Omega_{\blp,i}$ to be the event that $K_i \ge K(1 - \tau - \eps)$. By Lemma \ref{lem:corrchern} and \LP, we have $\Pr_{\blp}(\Omega_{\blp,i}^c) \le e^{-2K\eps^2}$.

Also define $\Omega_{\bY,i}$ to be the event that $\Lpphat_i - \Lpmhat_i \ge \bbE_{\bY|\blp}[\Lpphat_i - \Lpmhat_i] - \eps = \Lpp_i - \Lpm_i - \eps$. Note that conditioned on $\blp$, $\Lpphat_i - \Lpmhat_i = \frac1{K} \sum_{k=1}^K (\lpphat_{ik} - \lpmhat_{ik})$ is the sum of $K$ independent random variables with range $[0,1]$ (here we use the definition of BP and conditional independence of the small bags under \CIBM). By Hoeffding's inequality, $\bbP_{\bY|\blp}(\Omega_{\bY,i}^c) \le e^{-2K\eps^2}$.

Now define $\Omega_{\blp} := \bigcap_{i=1}^M \Omega_{\blp,i}$ and $\Omega_{\bY} := \bigcap_{i=1}^M \Omega_{\bY,i}$. Also define $\Theta$ to be the event that the first inequality in \eqref{eqn:llpepmbnd} does not hold. Then
\begin{align*}
\bbP(\Theta) 
&\le \bbP(\Theta|\Omega_{\blp} \cap \Omega_{\bY}) + \bbP((\Omega_{\blp} \cap \Omega_{\bY})^c) \\
&\le \bbP(\Theta|\Omega_{\blp} \cap \Omega_{\bY}) + \bbP(\Omega_{\blp}^c) + \bbP(\Omega_{\bY}^c) \\
&\le \bbP(\Theta|\Omega_{\blp} \cap \Omega_{\bY}) + \frac{N}{K} e^{-2K\eps^2} + \bbE_{\blp}\ee{\bY|\blp}{\ind{\Omega_{\bY}^c}} \\
&\le \bbP(\Theta|\Omega_{\blp} \cap \Omega_{\bY}) + \frac{2N}{K} e^{-2K\eps^2} \\
&= \ee{\blp,\bY}{\ee{\bX|\blp,\bY}{\ind{\Theta}| \blp, \bY} | \Omega_{\blp} \cap \Omega_{\bY}\ } + \frac{2N}{K} e^{-2K\eps^2}. 
\end{align*}

We next bound the inner expectation of the last line above, which is the conditional probability of $\Theta$ given fixed values of $(\blp,\bY) \in \Omega_{\blp} \cap \Omega_{\bY}$. We will bound this probability the same  argument as in the proof of Thm. \ref{thm:multirad}. To apply that argument, we first need to confirm two things: Conditioned on $\blp, \bY$, (1) for each $i$, $\Lpphat_i - \Lpmhat_i > 0$, and (2) the empirical error $\sEt(f)$ is an unbiased estimate of $\sE_P^\ell$. The first property is given by the following.

\begin{lemma}
\label{lem:lpdiff}
Conditioned on $(\blp,\bY) \in \Omega_{\blp} \cap \Omega_{\bY}$, for all $i \in [M]$
\[
\Lpphat_i - \Lpmhat_i \ge \Lpp_i - \Lpm_i - \eps \ge \eps_0. 
\]
\end{lemma}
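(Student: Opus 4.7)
The plan is to handle the two inequalities separately, with the first being essentially immediate and the second being a consequence of combining Jensen's inequality with the $\Omega_{\blp,i}$ event.

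For the first inequality $\Lpphat_i - \Lpmhat_i \ge \Lpp_i - \Lpm_i - \eps$, I would simply invoke the definition of $\Omega_{\bY,i}$, which is exactly this statement. Since we are conditioning on $(\blp,\bY) \in \Omega_{\blp} \cap \Omega_{\bY} \subseteq \Omega_{\bY,i}$, this holds for every $i \in [M]$.

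For the second inequality $\Lpp_i - \Lpm_i - \eps \ge \eps_0$, the key step is to lower bound $\Lpp_i - \Lpm_i$ using the definition of BP and Jensen's inequality. Writing $\Lpp_i - \Lpm_i = \frac1K \sum_{k=1}^K \bbE_{\bY|\blp}[\lpphat_{ik} - \lpmhat_{ik}]$ and noting that under BP the difference equals $\lpphat_{ik} - \lpmhat_{ik} = |\lphat_{j_1} - \lphat_{j_2}|$ for the two small-bag indices $j_1, j_2$ of the $k$th pair, Jensen's inequality gives
\[
\bbE_{\bY|\blp}[|\lphat_{j_1} - \lphat_{j_2}|] \;\ge\; |\bbE_{\bY|\blp}[\lphat_{j_1} - \lphat_{j_2}]| \;=\; |\lp_{j_1} - \lp_{j_2}| \;=\; \lpp_{ik} - \lpm_{ik}.
\]
Summing and using that at least $K_i$ of the $k \in [K]$ satisfy $\lpp_{ik} - \lpm_{ik} \ge \Delta$ (with the rest being nonnegative), we get $\Lpp_i - \Lpm_i \ge \frac{K_i}{K}\Delta$.

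Finally, on the event $\Omega_{\blp,i}$ we have $K_i \ge K(1-\tau-\eps)$, so $\Lpp_i - \Lpm_i \ge \Delta(1-\tau-\eps)$. The choice of $\eps \le \frac{\Delta(1-\tau)-\eps_0}{1+\Delta}$ is exactly the one that makes $\Delta(1-\tau) - \eps(1+\Delta) \ge \eps_0$, from which
\[
\Lpp_i - \Lpm_i - \eps \;\ge\; \Delta(1-\tau) - \eps(1+\Delta) \;\ge\; \eps_0,
\]
completing the proof. The only subtle step is the Jensen application, which is why the true proportion pair $(\lpp_{ik},\lpm_{ik})$ is defined via the maximum/minimum of the underlying $\lp_j$'s rather than via the BP indicator on $\lphat_j$'s — this asymmetry is precisely what turns the expectation inequality into the correct direction, and I expect that to be the main conceptual point to get right.
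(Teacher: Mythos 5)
Your proof is correct and follows essentially the same route as the paper's: the first inequality is read off from the definition of $\Omega_{\bY}$, and the second combines the Jensen step $\bbE_{\bY|\blp}[\lpphat_{ik}-\lpmhat_{ik}] = \bbE[|\lphat_{j_1}-\lphat_{j_2}|] \ge |\lp_{j_1}-\lp_{j_2}| = \lpp_{ik}-\lpm_{ik}$ (valid under BP and \CIBM) with the event $\Omega_{\blp,i}$ and the constraint $\eps \le \frac{\Delta(1-\tau)-\eps_0}{1+\Delta}$. The paper phrases this as one chained bound on $\Lpphat_i - \Lpmhat_i$, but the content is identical.
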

\begin{proof}
Fix $(\blp,\bY) \in \Omega_{\blp} \cap \Omega_{\bY}$. Let $i \in [M]$. By definition of $\Omega_{\bY}$, 
\begin{align*}
\Lpphat_i - \Lpmhat_i &\ge \bbE_{\bY|\blp}[\Lpphat_i - \Lpmhat_i] - \eps \\
&= \left( \frac1{K} \sum_{k=1}^K \bbE_{\bY|\blp}[\lpphat_{ik} - \lpmhat_{ik}] \right) - \eps \\
&\ge \left( \frac1{K} \sum_{k=1}^K \lpp_{ik} - \lpm_{ik} \right) - \eps.
\end{align*}
To see the last step, let $U$ and $V$ be random variables with means $p$ and $q$. Then $\bbE[\max(U,V) - \min(U,V)] = \bbE[|U - V|] \ge |\bbE[U - V]| = |p - q| = \max(p,q) - \min(p,q)$, by Jensen's inequality. Here we have again used the definitions of BP and \CIBM.

By definition of $\Omega_{\blp}$, $\lpp_{ik} - \lpm_{ik} \ge \Delta$ for $K_i \ge K(1-\tau - \eps)$ values of $k \in [K]$. From this we conclude that $\Lpphat_i - \Lpmhat_i \ge \Delta(1-\tau - \eps) - \eps \ge \eps_0$, where the last step follows from $\eps \le \frac{\Delta(1-\tau) - \eps_0}{1 + \Delta}$.
\end{proof}

For the second property, recall $\sEt(f) = \sum_i w_i \sEt_i(f)$ with $w \in \Delta^M$ and $w_i \propto (\Lpphat_i - \Lpmhat_i)^2$. We note that $\ee{\bX|\blp,\bY \in \Omega_{\blp} \cap \Omega_{\bY}}{\sEt_i(f)}$ is well defined because $|\ell^{\cphat_i}(f(x))|$ is bounded for $x \in \sX$. This follows from the assumption $\sup_{f \in \sF, x \in \sX} |f(x)| \le A < \infty$, the fact that $\ell^{\cphat_i}$ is Lipschitz continuous on $\Omega_{\blp} \cap \Omega_{\bY}$ by Lemma \ref{lem:lpdiff}, and the observation $|\ell^{\cphat_i}(f(x))| \le |\ell^{\cphat_i}|_0 + |\ell^{\cphat_i}|A$.
\begin{lemma}
\label{lem:llpunbiased}
For all $f \in \sF$, $\ee{\bX|\blp,\bY \in \Omega_{\blp} \cap \Omega_{\bY}}{\sEt_i(f)} = \sE_{P}^\ell(f)$. 
\end{lemma}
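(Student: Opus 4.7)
The plan is to exploit the fact that conditioning on $(\blp, \bY) \in \Omega_{\blp} \cap \Omega_{\bY}$ freezes all quantities that define the corrected loss: the empirical label proportions $\Lpphat_i, \Lpmhat_i$ are deterministic functions of $\bY$, and hence so is $\cphat_i = (1 - \Lpphat_i, \Lpmhat_i)$, making $\ell^{\cphat_i}$ a fixed (bounded, Lipschitz on $\Omega_{\blp}\cap\Omega_{\bY}$) loss. Under \CIBM, the conditional distribution of each $X_{ij}^\sig$ given $(\blp, \bY)$ is $P_{Y_{ij}^\sig}$, so that $\bbE[\ell_\sig^{\cphat_i}(f(X_{ij}^\sig)) \,|\, \blp,\bY] = \bbE_{X \sim P_{Y_{ij}^\sig}}[\ell_\sig^{\cphat_i}(f(X))]$. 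Each of these conditional expectations exists and is finite by the uniform bound $\sup_{f\in\sF}\sup_{x\in\sX}|f(x)| \le A$ and Lipschitzness.

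The next step is a bookkeeping observation: because all small bags have the same size $n$, the definition $\Lpphat_i = \frac{1}{K}\sum_{j \in I_i^+} \lphat_j = \frac{1}{nK}\sum_{j \in I_i^+}\sum_k \frac{Y_{j,k}+1}{2}$ expresses $\Lpphat_i$ as the fraction of positive labels among the $nK$ elements of $B_i^+$. Thus, re-indexing within the big bag, exactly $nK\,\Lpphat_i$ of the $Y_{ij}^+$ equal $+1$ and $nK(1-\Lpphat_i)$ equal $-1$; an identical statement holds for $B_i^-$ with $\Lpmhat_i$.

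Plugging this count into the sum inside $\sEt_i(f)$, the $\sig = +$ contribution becomes
\[
\sum_{j=1}^{nK}\bbE_{X\sim P_{Y_{ij}^+}}[\ell_+^{\cphat_i}(f(X))] = nK\big(\Lpphat_i\,\bbE_{X\sim \Pp}[\ell_+^{\cphat_i}(f(X))] + (1-\Lpphat_i)\,\bbE_{X\sim \Pm}[\ell_+^{\cphat_i}(f(X))]\big) = nK\,\bbE_{X\sim \Pp^{\cphat_i}}[\ell_+^{\cphat_i}(f(X))],
\]
since $\Lpphat_i\Pp + (1-\Lpphat_i)\Pm = (1-\cphat_i^+)\Pp + \cphat_i^+\Pm = \Pp^{\cphat_i}$ by definition $\cphat_i^+ = 1-\Lpphat_i$. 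The analogous identity for $\sig=-$ uses $\cphat_i^- = \Lpmhat_i$ to produce $\Pm^{\cphat_i}$. Combining both with the normalization of $\sEt_i$ then yields $\ee{\bX|\blp,\bY}{\sEt_i(f)} = \sE_{P^{\cphat_i}}^{\ell^{\cphat_i}}(f)$.

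Finally, I would invoke Proposition \ref{prop:unbiased} to conclude $\sE_{P^{\cphat_i}}^{\ell^{\cphat_i}}(f) = \sE_P^\ell(f)$; its hypothesis $\cphat_i^+ + \cphat_i^- < 1$ is satisfied because $\cphat_i^+ + \cphat_i^- = 1 - (\Lpphat_i - \Lpmhat_i) \le 1 - \eps_0 < 1$ on $\Omega_{\blp}\cap\Omega_{\bY}$ by Lemma \ref{lem:lpdiff}. The only real care needed is the indexing/counting step that identifies $\Lpphat_i$ with the empirical positive rate of the merged bag $B_i^+$; once that is in place, the remainder is a direct substitution into the MCM identity and an appeal to Proposition \ref{prop:unbiased}.
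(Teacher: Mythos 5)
Your proposal is correct and follows essentially the same route as the paper's proof: count the positives in each merged bag as $nK\Lpphat_i$ (resp. $nK\Lpmhat_i$), use the \CIBM conditional marginals $X_{ij}\sim P_{Y_{ij}}$ to rewrite the conditional expectation as a mixture, recognize the mixtures as $\Pp^{\cphat_i}$ and $\Pm^{\cphat_i}$, and conclude via Proposition~\ref{prop:unbiased}, with Lemma~\ref{lem:lpdiff} guaranteeing $\cphat_i^+ + \cphat_i^- < 1$ on $\Omega_{\blp}\cap\Omega_{\bY}$. The only difference from the paper is presentational (direct label counting versus the paper's weighted-average rewriting), so nothing further is needed.
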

\begin{proof}
Recall that $X_{mj}$ denotes the $j$th instance in the $m$th original (pre-merging) small bag, $m \in [2N]$, $j \in [n]$, and that $Y_{mj}$ denotes the corresponding label. We have
\begin{multline*}
\ee{\bX|\blp,\bY \in \Omega_{\blp} \cap \Omega_{\bY}}{\sEt_i(f)} \\
\begin{aligned}
&= \frac12 \ee{\bX|\blp,\bY \in \Omega_{\blp} \cap \Omega_{\bY}}{
\frac1{nK} \sum_{m \in I_i^+} \sum_{j=1}^n \ell_+^{\cphat_i}(f(X_{mj})) + \frac1{nK} \sum_{m \in I_i^-} \sum_{j=1}^n \ell_-^{\cphat_i}(f(X_{mj})) } \\
&= \frac12 \bbE_{\bX|\blp,\bY \in \Omega_{\blp} \cap \Omega_{\bY}} \Bigg[ \Lpphat_i \frac1{nK\Lpphat_i} \sum_{m \in I_i^+} \sum_{j:Y_{mj} = 1} \ell_+^{\cphat_i}(f(X_{mj})) \\
& \qquad + (1-\Lpphat_i) \frac1{nK(1-\Lpphat_i)} \sum_{m \in I_i^+} \sum_{j:Y_{mj} = -1} \ell_+^{\cphat_i}(f(X_{mj}))  \\
& \qquad + \Lpmhat_i \frac1{nK\Lpmhat_i} \sum_{m \in I_i^-} \sum_{j:Y_{mj} = 1} \ell_-^{\cphat_i}(f(X_{mj})) \\
& \qquad + (1-\Lpmhat_i) \frac1{nK(1-\Lpmhat_i)} \sum_{m \in I_i^-} \sum_{Y_{mj} = -1} \ell_-^{\cphat_i}(f(X_{mj})) \Bigg] \\
&= \frac12 \Big\{ \Lpphat_i \ee{X \sim \Pp}{\ell_+^{\cphat_i}(f(X))} + (1-\Lpphat_i) \ee{X \sim \Pm}{\ell_+^{\cphat_i}(f(X))} \\
& \qquad +  \Lpmhat_i \ee{X \sim \Pp}{\ell_-^{\cphat_i}(f(X))} + (1-\Lpmhat_i) \ee{X \sim \Pm}{\ell_-^{\cphat_i}(f(X))} \Big\} \\
&= \frac12 \Bigg\{ \ee{X \sim \Pp^{\cphat_i}}{\ell_+^{\cphat_i}(f(X))} + \ee{X \sim \Pm^{\cphat_i}}{\ell_-^{\cphat_i}(f(X))} \Bigg\} \\
&= \sE_{P}^\ell(f)
\end{aligned}
\end{multline*}
where the third step uses the definition of \CIBM, and the last step uses Prop. \ref{prop:unbiased} and Lemma \ref{lem:lpdiff}.
\end{proof}


By Lemmas \ref{lem:lpdiff} and Lemma \ref{lem:llpunbiased}, we can apply the argument in the proof of Theorem \ref{thm:multirad}, conditioned on $(\blp,\bY) \in \Omega_{\blp} \cap \Omega_{\bY}$, with the estimator $\sEt$ instead of $\sEhat_w$. The only other changes are that in the application of Lemma \ref{lemmaZhang}, we use the bound 
\[
|\ell^{\cphat_i}| \le \frac{|\ell|}{\Lpphat_i - \Lpmhat_i} \le \frac{|\ell|}{\Lpp_i - \Lpm_i - \eps},
\]
and in the final bounds, we upper bound $(\Lpphat_i - \Lpmhat_i)^{-1}$ by $(\Lpp_i - \Lpm_i - \eps)^{-1}$.


\section{Symmetric Losses}

A loss is said to by {\em symmetric} if there exists a constant $K$ such that for all $t$, $\ell(t,1) + \ell(t,-1) = K$. Examples include the 0-1, sigmoid, and ramp losses. For a symmetric loss, $\ell^\cp$ simplifies to
\[
\ell^\cp(t,y) = \frac1{1 - \cpp - \cpm} \ell(t,y) - \frac{K}{1 - \cpp - \cpm}(\cpm \ind{y=1} + \cpp \ind{y=-1}).
\]
Combined with Proposition \ref{prop:unbiased}, this yields
\[
\sE_{P^\cp}^\ell(f) = (1 - \cpp - \cpm) \sE_P^\ell(f) + K \Big(\frac{\cpp + \cpm}2\Big).
\]
Therefore, the two sides have the same minimizer which implies that the BER is {\em immune} to label noise under a mutual contamination model. That is, training on the contaminated data without modifying the loss still minimizes the clean BER. This result has been previously observed for the 0/1 loss \cite{menon15icml} and general symmetric losses \cite{rooyen15tr,charoenphakdee19icml}. The above argument gives a simple derivation from Prop. \ref{prop:unbiased}.

\section{Convexity}

We say that the loss $\ell$ is {\em convex} if, for each $\sig$, $\ell_\sig(t)$ is a convex function of $t$. 
Let $\ell_\sig''$ denote the second derivative of $\ell$ with respect to its first variable. The condition in \eqref{eqn:second} below was used by \citet{natarajan18jmlr} to prove a convexity result an unbiased loss in the class-conditional noise setting. Here we prove a version for MCMs.
\begin{prop}
\label{prop:conv1}
Suppose $\cpm + \cpp < 1$ and let $\ell$ be a convex, twice differentiable loss satisfying 
\begin{equation}
\label{eqn:second}
\ell_+''(t) = \ell_-''(t).
\end{equation}
If $\cp^\sig < \frac12$ for $\sig \in\{\pm\}$, then $\ell^\cp$ is convex.
\end{prop}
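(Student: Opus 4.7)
The plan is to verify convexity of $\ell^\cp_\sig$ for each $\sig \in \{\pm\}$ by computing the second derivative directly and checking its sign, exploiting the definitional formula for $\ell^\cp$ given earlier in the paper together with the hypothesis $\ell_+''(t) = \ell_-''(t)$.

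First, I would recall the defining formula
\[
\ell^\cp_\sig(t) = \frac{1-\cp^{-\sig}}{1-\cpm-\cpp}\ell_\sig(t) - \frac{\cp^{-\sig}}{1-\cpm-\cpp}\ell_{-\sig}(t),
\]
which is a fixed linear combination of $\ell_+$ and $\ell_-$. Since $\ell$ is twice differentiable, so is $\ell^\cp_\sig$, and differentiating twice in $t$ yields
\[
(\ell^\cp_\sig)''(t) = \frac{1-\cp^{-\sig}}{1-\cpm-\cpp}\ell_\sig''(t) - \frac{\cp^{-\sig}}{1-\cpm-\cpp}\ell_{-\sig}''(t).
\]

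Next, I would substitute the hypothesis $\ell_+''(t) = \ell_-''(t)$ to collapse the two terms, obtaining
\[
(\ell^\cp_\sig)''(t) = \frac{1 - 2\cp^{-\sig}}{1-\cpm-\cpp}\,\ell_\sig''(t).
\]
Now it suffices to check that the prefactor is nonnegative: the denominator $1 - \cpm - \cpp$ is positive by assumption, and the numerator $1 - 2\cp^{-\sig}$ is nonnegative precisely because $\cp^{-\sig} < \tfrac12$ by hypothesis. Finally, $\ell_\sig'' \ge 0$ everywhere by convexity of the original loss, so $(\ell^\cp_\sig)''(t) \ge 0$ for all $t$, which proves $\ell^\cp_\sig$ is convex. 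Doing this for both $\sig \in \{\pm\}$ completes the argument.

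There is no real obstacle here — the result is essentially a two-line calculation once the defining formula for $\ell^\cp$ and the hypothesis $\ell_+'' = \ell_-''$ are combined. The only subtlety worth flagging is that the hypothesis $\ell_+'' = \ell_-''$ is precisely what is needed to remove the subtraction of a nonnegative quantity (which would otherwise have indeterminate sign), reducing the convexity check to the scalar inequality $\cp^{-\sig} \le \tfrac12$. This mirrors exactly the role that the analogous second-derivative condition plays in the class-conditional noise analysis of Natarajan et al.
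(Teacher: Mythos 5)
Your proof is correct and follows essentially the same route as the paper: differentiate the defining linear combination twice, use $\ell_+''=\ell_-''$ to collapse it to $(\ell_\sig^\cp)''(t)=\frac{1-2\cp^{-\sig}}{1-\cpm-\cpp}\,\ell_\sig''(t)$, and conclude nonnegativity from $\cpp+\cpm<1$, $\cp^{-\sig}<\tfrac12$, and convexity of $\ell$. No issues.
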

Examples of losses satisfying the second order condition include the logistic, Huber, and squared error losses. The result is proved by simply observing
\begin{align*}
(\ell_\sig^\cp)''(t) &= \ell_+''(t) \frac{1 - 2\cp^{-\sig}}{1 - \cpm - \cpp} \\
&\ge 0.
\end{align*}

The statement about $\sEhat_i(f)$ being convex when $f$ is linear was a holdover from an earlier draft and should be disregarded. In the infinite bag size limit, $\sEhat_i(f)$ converges to $\sE_P^\ell(f)$, which is convex in the output of $f$ provided $\ell$ is convex. Sufficient conditions for the convexity of $\sEhat_i(f)$ or $\sEhat_w(f)$ for small bag sizes is an interesting open question.

\section{{\bf (CIBM')} implies \IBM}

Assume that {\bf (CIBM')} holds. To show \IBM, we need to show that for a fixed bag $i$, and for all $j \in [n_i]$, the marginal distribution of $X_{ij}$, conditioned on the bag, is $\lp_i \Pp + (1 - \lp_i) \Pm$. Thus let $A$ be an arbitrary event. Also let $p_i$ be the joint pmf of $Y_{i1},\ldots, Y_{in_i}$, conditioned on the bag. Without loss of generality let $j=1$.
We have
\begin{align}
    \bbP(X_{i1} \in A) &= \bbE_X[\ind{X_{i1} \in A}] \nonumber \\
    &= \bbE_{Y_{i1}, \ldots, Y_{in_i}} \ee{X_{i1}| Y_{i1}, \ldots, Y_{in_i}}{\ind{X_{i1}\in A}} \nonumber \\
    &= \bbE_{Y_{i1}, \ldots, Y_{in_i}} \bbP_{Y_{i1}}(X_{i1}\in A) \label{eqn:cibm1} \\
    &=\sum_{(y_1,\ldots,y_{n_i})\in \{-1,1\}^{n_i}} \bbP_{y_1}(X_{i1} \in A) p_i(y_1,\ldots, y_{n_i}) \nonumber \\
    &= \Pp(A) \sum_{(y_2,\ldots,y_{n_i})\in \{-1,1\}^{n_i-1}} p_i(1,y_2,\ldots, y_{n_i}) \nonumber \\
    & \qquad + \Pm(A) \sum_{(y_2,\ldots,y_{n_i})\in \{-1,1\}^{n_i-1}} p_i(-1,y_2,\ldots, y_{n_i}) \nonumber \\
    &= \lp_i \Pp(A) + (1-\lp_i) \Pm(A), \label{eqn:cimb2}
\end{align}
where \eqref{eqn:cibm1} and \eqref{eqn:cimb2} use {\bf (CIMB')}.

\section{Optimal Bag Matching}

The bound is minimized by selecting weights
$$
w_i \propto \bar{n}_i (\lp_i^+ - \lp_i^-)^2,
$$
which gives preference to pairs of bags where one bag is mostly +1's (large $\lp_i^+$) and the other is mostly -1's (small $\lp_i^-$). With these weights, the \SR bound is proportional to under \CIIM
\[
\sqrt{\left( \sum_{i=1}^N \bar{n}_i (\lp_i^+ - \lp_i^-)^2 \right)^{-1}}.
\]
Here and below, under {\bf (CIBM')}' substitute $\bar{n}_i \to 1$. 

We can optimize the pairing of bags by further optimizing the bound. Consider the unpaired bags $(B_i, \lp_i)$, $i=1,\ldots, 2N$. Recall that $\bar{n}_i = \HM(n_i^+,n_i^-)$.
We would like to pair each bag to a different bag, forming pairs $(\lp_i^+, \lp_i^-)$, such that 
$$
\sum_{i=1}^N \bar{n}_i (\lp_i^+ - \lp_i^-)^2
$$
is maximized. For each $i < j$, let $u_{ij}$ be a binary variable, with $u_{ij}=1$ indicating that the $i$th and $j$th bags are paired. The optimal pairing of bags is given by the solution to the following integer program:
\begin{align}
\max_{u}  & \ \ \sum_{1 \le i < 2N} \sum_{i < j \le 2N} \HM(n_i,n_j) (\lp_i - \lp_j)^2 u_{ij} \label{eqn:ip} \\
 \text{s.t.} & \ \ u_{ij} \in \{0,1\}, \forall i,j \nonumber \\
& \ \  \sum_{i < j} u_{ij} + \sum_{j < i} u_{ji} = 1, \forall i \nonumber
\end{align}
The equality constraint ensures that every bag is paired with precisely one other distinct bag. This problem is known as the ``maximum weighted (perfect) matching" problem. An exact algorithm to solve it was given by \citet{edmonds1965maximum}, and several approximate algorithms also exist for large scale problems.

When $n_i^\sig = n$ for all $i$ and $\sig$, the solution to this integer program is very simple.
\begin{prop}
If $n_i^\sig = n$ for all $i$ and $\sig$, then the solution to \eqref{eqn:ip} is to match the largest $\lp_i$ with the smallest, the second largest $\lp_i$ with the second smallest, and so on.
\end{prop}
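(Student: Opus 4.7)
When $n_i^{\sigma}=n$ for all $i,\sigma$, every coefficient $\operatorname{HM}(n_i,n_j)$ in the objective of \eqref{eqn:ip} equals $n$, so the integer program reduces to maximizing $\sum_{\text{pairs}} (\gamma_i - \gamma_j)^2$ over perfect matchings of the $2N$ values $\gamma_1,\dots,\gamma_{2N}$. Reorder the bags so that $\gamma_{(1)} \le \gamma_{(2)} \le \cdots \le \gamma_{(2N)}$. The claim is that the unique (up to ties) optimal matching is the ``antipodal'' one, $M^\star = \{(\gamma_{(k)}, \gamma_{(2N+1-k)}) : k \in [N]\}$. My plan is to prove this by an exchange argument, reducing the global optimality question to a four-element lemma.

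\textbf{Four-element lemma.} For any reals $w \le x \le y \le z$, among the three perfect matchings of $\{w,x,y,z\}$, the matching $\{(w,z),(x,y)\}$ maximizes the sum of squared differences. I would prove this by direct algebra:
\begin{align*}
\bigl[(z-w)^2 + (y-x)^2\bigr] - \bigl[(x-w)^2 + (z-y)^2\bigr] &= 2(z-x)(y-w) \ge 0,\\
\bigl[(z-w)^2 + (y-x)^2\bigr] - \bigl[(y-w)^2 + (z-x)^2\bigr] &= 2(x-w)(z-y) \ge 0.
\end{align*}
Both quantities are nonnegative since the four values are sorted, and strictly positive when the values are distinct.

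\textbf{Global exchange argument.} Let $M$ be any optimal perfect matching, and suppose for contradiction that $M \neq M^\star$. Then the smallest value $\gamma_{(1)}$ is not paired with the largest value $\gamma_{(2N)}$ in $M$; say $(\gamma_{(1)}, \gamma_{(a)}) \in M$ and $(\gamma_{(2N)}, \gamma_{(b)}) \in M$ with $a \ne 2N$ and $b \ne 1$. The four values $\gamma_{(1)} \le \gamma_{(\min(a,b))} \le \gamma_{(\max(a,b))} \le \gamma_{(2N)}$ satisfy the hypothesis of the four-element lemma, so replacing the two pairs $(\gamma_{(1)}, \gamma_{(a)})$ and $(\gamma_{(2N)}, \gamma_{(b)})$ with $(\gamma_{(1)}, \gamma_{(2N)})$ and $(\gamma_{(a)}, \gamma_{(b)})$ weakly increases the objective (and strictly increases it unless the corresponding $\gamma$-values are tied, in which case the swap leaves the objective unchanged and produces an equivalent optimal matching). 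Hence there exists an optimal matching containing the pair $(\gamma_{(1)}, \gamma_{(2N)})$. Removing these two bags leaves a subproblem of the same form on $\gamma_{(2)},\dots,\gamma_{(2N-1)}$, and induction on $N$ concludes the argument.

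\textbf{Main obstacle.} The proof is essentially elementary; the only subtlety is handling ties in the $\gamma$-values carefully, so that the exchange step always produces a valid matching and the induction step is clean. The algebraic identities in the four-element lemma are the workhorse, and everything else is a standard rearrangement/exchange argument, so I do not anticipate any deep difficulty beyond bookkeeping.
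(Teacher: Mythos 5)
Your proposal is correct and rests on the same core idea as the paper's proof: an exchange (rearrangement) argument driven by the algebraic identity that swapping partners in two crossed pairs changes the objective by $2(\lpp_i-\lpp_j)(\lpm_i-\lpm_j)$. The paper phrases it as a one-step proof by contradiction on an optimal matching rather than your four-element lemma plus induction, but the two arguments are essentially identical, with your version merely adding explicit bookkeeping for ties and the inductive peeling of the extreme pair.
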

\begin{proof}
Suppose the statement is false. Then there exists an optimal solution, and $i$ and $j$, such that $\lpp_i > \lpp_j$ and $\lpm_i > \lpm_j$. Now consider the matching obtained by swapping the bags associated to $\lpm_i$ and $\lpm_j$. Then the objective function increases by
\[
(\lpp_i - \lpm_j)^2 + (\lpp_j - \lpm_i)^2 - (\lpp_i - \lpm_i)^2 - (\lpp_j - \lpm_j)^2 = 2(\lpp_i - \lpp_j)(\lpm_i - \lpm_j) > 0.
\]
This contradicts the assumed optimality. 
\end{proof}

\section{Merging Schemes that Dominate Blockwise-Pairwise}

Let $\Lppbp_{i}$ and $\Lpmbp_{i}$ denote the quantities $\Lpp_i$ and $\Lpm_i$ when the merging scheme is BP, and let $\Lpp_i$ and $\Lpm_i$ refer to any other merging scheme under consideration. Similarly, let $\Lpphatbp_{i}$ and $\Lpmhatbp_{i}$ denote the quantities $\Lpphat_i$ and $\Lpmhat_i$ when the merging scheme is BP, and let $\Lpphat_i$ and $\Lpmhat_i$ refer to any other merging scheme under consideration.

For a $K$-merging scheme that dominates BP, we still have $\Lpphat_i - \Lpmhat_i \ge \Lppbp_{i} - \Lpmbp_{i} - \eps \ge \eps_0 > 0$ on $\Omega_{\blp} \cap \Omega_{\bY}$ by definition of dominating. Hence the same proof goes through in this case, and we may state the following.

\begin{thm}
\label{thm:llpempbnddom}
Let \LP hold. Let $\eps_0 \in (0,\Delta(1-\tau))$. Let $\ell$ be a Lipschitz loss and let $\sF$ satisfy $\sup_{x \in \sX, f \in \sF} |f(x)| \le A < \infty$. Let $\eps \in (0,\frac{\Delta(1-\tau)-\eps_0}{1+\Delta}]$ and $\delta \in (0,1]$. For any $K$-merging scheme that dominates $BP$, under \CIIM, with probability at least $1 - \delta - 2\frac{N}{K} e^{-2K \epsilon^2}$ with respect to the draw of $\blp, \bY, \bX$,
\[
\Lpphat_i - \Lpmhat_i \ge \Lppbp_{i} - \Lpmbp_{i} - \eps \ge \eps_0
\]
and
\begin{equation}
\label{eqn:llpepmbnddom}
\sup_{f \in \sF} \abs{
\sEt(f) - \sE(f)} \le 2 \mathfrak{R}_c^{I}(\sF) + C \sqrt{\frac{\HM((\Lppbp_{i} - \Lpmbp_{i} - \eps)^{-2})}{(N/K)n}} \stackrel{\SR}{\le} D \sqrt{\frac{\HM((\Lppbp_{i} - \Lpmbp_{i} - \eps)^{-2})}{(N/K)n}},
\end{equation}
where $c_i = w_i |\ell|/(\Lppbp_i - \Lpmbp_i - \eps)$, $C = (1 + A|\ell|) \sqrt{\log(2/\delta)}$, and $D=2 B|\ell| + C$. Under {\bf (CIBM)}, the same bounds hold with the same probability if we substitute $\mathfrak{R}_{c}^I(\sF) \to \mathfrak{R}_{c}^B(\sF)$ and $n \to 1$. 
\end{thm}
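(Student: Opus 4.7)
The plan is to reuse the proof of Theorem \ref{thm:llpempbnd} almost verbatim, adapting only the handful of places where the BP-specific structure was invoked. Let $(\Lpphatbp_i, \Lpmhatbp_i)$ denote the empirical big-bag proportions that would have been produced by BP, and $(\Lpphat_i, \Lpmhat_i)$ the ones produced by the dominating scheme. Define the same events $\Omega_{\blp,i}$ (concentration of the count $K_i$ of BP-pairs with $|\lpp_{ik}-\lpm_{ik}| \ge \Delta$) and $\Omega_{\bY,i}$ (Hoeffding concentration of $\Lpphatbp_i - \Lpmhatbp_i$ around $\Lppbp_i - \Lpmbp_i$) as in the BP proof. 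Since these only involve BP quantities, Lemma \ref{lem:corrchern} plus Hoeffding still yield $\bbP((\Omega_\blp \cap \Omega_\bY)^c) \le 2(N/K) e^{-2K\eps^2}$.

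Next, I would establish the replacement for Lemma \ref{lem:lpdiff}. On $\Omega_\blp \cap \Omega_\bY$, the BP proof shows $\Lpphatbp_i - \Lpmhatbp_i \ge \Lppbp_i - \Lpmbp_i - \eps \ge \eps_0$. By the definition of domination, $\Lpphat_i - \Lpmhat_i \ge \Lpphatbp_i - \Lpmhatbp_i$ pointwise, so the chain $\Lpphat_i - \Lpmhat_i \ge \Lppbp_i - \Lpmbp_i - \eps \ge \eps_0 > 0$ holds, which is the first displayed conclusion of the theorem and also the positivity needed to ensure $\cphat_i^+ + \cphat_i^- < 1$ for the dominating scheme.

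With positivity in hand, the unbiasedness of $\sEt_i(f)$ conditioned on $(\blp,\bY)\in\Omega_\blp \cap \Omega_\bY$ (the analog of Lemma \ref{lem:llpunbiased}) goes through unchanged: the merging rule is a function of $(\lphat_j)$ only, so conditioning on $\bY$ fixes the assignment $I_i^\pm$, and \CIBM then guarantees that the contaminated expectations match $\Pp^{\cphat_i}, \Pm^{\cphat_i}$, at which point Prop. \ref{prop:unbiased} closes the identity $\bbE[\sEt_i(f) \mid \blp,\bY] = \sE_P^\ell(f)$. From here the Rademacher argument from Theorem \ref{thm:multirad} applies conditionally, using the Lipschitz bound $|\ell^{\cphat_i}| \le |\ell|/(\Lpphat_i - \Lpmhat_i) \le |\ell|/(\Lppbp_i - \Lpmbp_i - \eps)$ from the dominating-scheme chain above; the same upper bound replaces $(\Lpphat_i - \Lpmhat_i)^{-1}$ throughout the bounded-differences step, yielding the harmonic mean in \eqref{eqn:llpepmbnddom}.

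The main ``obstacle'' is conceptual rather than technical: one has to recognize that even though the dominating scheme may produce larger separations $\Lpphat_i - \Lpmhat_i$ than BP, our high-probability lower bound for this separation must still be phrased in terms of BP, because it is the BP-merged bags whose empirical proportions we controlled by Hoeffding. Consequently the guaranteed bound is expressed in the BP quantities $\Lppbp_i - \Lpmbp_i - \eps$ even though the realized estimator $\sEt$ is built from the dominating scheme's bags. The \IBM/\CIBM version follows by the exact same substitutions $\mathfrak{R}_c^I \to \mathfrak{R}_c^B$, $n \to 1$ made in Theorem \ref{thm:multirad}.
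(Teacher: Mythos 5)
Your proposal is correct and follows essentially the same route as the paper: the paper's own argument for dominating schemes is precisely to note that the events $\Omega_{\blp},\Omega_{\bY}$ are defined through the BP quantities, that domination gives $\Lpphat_i - \Lpmhat_i \ge \Lpphatbp_i - \Lpmhatbp_i \ge \Lppbp_i - \Lpmbp_i - \eps \ge \eps_0$ on that event, and that the proof of Theorem \ref{thm:llpempbnd} then goes through verbatim. Your additional check that the merging assignment is fixed once $(\blp,\bY)$ are conditioned on (so the unbiasedness lemma still applies) is a point the paper leaves implicit, but it matches the intended argument.
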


We conjecture that it is possible to improve the bound for dominating schemes. Using the current proof technique, this would require proving that
\[
\Lpphat_i - \Lpmhat_i \ge \Lpp_{i} - \Lpm_{i} - \eps
\]
with high probability. For example, with BM, this would require a one-sided tail inequality for how the difference between the average of the larger half and the average of the smaller half of $2K$ independent random variables deviates from its mean. The BP scheme was selected as a reference because it is straightforward to prove such a bound for BP using Hoeffding's inequality.

\section{Consistency}

A discrimination rule $\fhat$ is (weakly) consistent if $\sE_P^{\ell}(\fhat) \to \inf_{f} \sE_{P}^{\ell}(f)$ in probability as $N \to \infty$, where the infimum is over all decision functions.

We first note that if we desire consistency wrt the BER defined with 0-1 loss, it suffices to prove consistency wrt the BER defined with a loss $\ell$ that is ``classification calibrated" \cite{bartlett06}. This is because the BER corresponds to a special case of the usual misclassification risk when the class probabilities are equal. Thus, let $\ell$ be Lipschitz and classification calibrated, such as the logistic loss.

We state our consistency result for the discrimination rule
\[
\fhat \in \argmin_{f \in \sF} J(f) := \sEt(f) + \lambda \|f \|_{\sF_k}^2,
\]
where $\sF_k$ is the reproducing kernel Hilbert space associated to a symmetric, positive definite kernel, and $\lambda > 0$.

\begin{thm}
\label{thm:llpconsist}
Let $\sX$ be compact and let $k$ be a bounded, universal kernel on $\sX$. Let $K \to \infty$ such that $N/K \to \infty$ and $N = O(K^\beta)$ for some $\beta > 0$, as $N \to \infty$. Let $\lambda$ be such that $\lambda \to 0$ and $\lambda (N/K) / \log (N/K) \to \infty$ as $N \to \infty$.  Let \LP and \CIBM hold. Then for any merging scheme that dominates BP, 
\begin{equation}
    \sE(\fhat) \to \inf_{f} \sE_{P}^{\ell}(f)
\end{equation}
in probability as $N \to \infty$.
\end{thm}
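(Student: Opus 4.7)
The plan is to follow the classical regularization-based consistency template (as in Steinwart's SVM consistency proofs), adapted to our setting via Theorem \ref{thm:llpempbnddom}. Decompose the excess risk as
\[
\sE_P^\ell(\fhat) - \inf_f \sE_P^\ell(f) = \underbrace{[\sE_P^\ell(\fhat) - \sEt(\fhat)]}_{T_1} + \underbrace{[\sEt(\fhat) + \lam\|\fhat\|_{\sF_k}^2 - \sEt(f_0) - \lam\|f_0\|_{\sF_k}^2]}_{T_2 \le 0} + \underbrace{[\sEt(f_0) - \sE_P^\ell(f_0)]}_{T_3} + \underbrace{[\sE_P^\ell(f_0) + \lam\|f_0\|_{\sF_k}^2 - \inf_f \sE_P^\ell(f)]}_{T_4},
\]
where $f_0 \in \sF_k$ is any element we choose. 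By definition of $\fhat$ as the minimizer of $J$, $T_2 \le 0$. The final term $T_4$ is the approximation error: since $\sX$ is compact and $k$ is universal, $\sF_k$ is dense in $C(\sX)$, and since $\ell$ is classification-calibrated and Lipschitz, a standard argument produces, for any $\eta > 0$, some $f_0 \in \sF_k$ with $\sE_P^\ell(f_0) \le \inf_f \sE_P^\ell(f) + \eta/2$; then choosing $\lam \to 0$ makes $\lam \|f_0\|_{\sF_k}^2 \to 0$, and $T_4 \le \eta$ eventually.

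Next I bound $T_1$ and $T_3$ by uniform convergence. For $T_3$ the function $f_0$ is fixed, so $T_3 \to 0$ in probability by the (non-uniform) analogue of Prop. \ref{prop:unbiasedest} combined with Lemma \ref{lem:llpunbiased} and a direct application of Thm. \ref{thm:llpempbnddom} restricted to the singleton $\{f_0\}$. The serious term is $T_1$. Because $\fhat$ is random, I control its norm first: evaluating $J$ at $0$ gives $\lam \|\fhat\|_{\sF_k}^2 \le J(\fhat) \le J(0) = \sEt(0)$, and on the high-probability event of Thm. \ref{thm:llpempbnddom} the quantity $\sEt(0)$ is bounded by a constant depending only on $\ell$, $\eps_0$, and $|\ell|_0$ (via the bound $|\ell_\sig^{\cphat_i}(0)| \le |\ell|_0/(\Lpphat_i - \Lpmhat_i)$ and $\Lpphat_i - \Lpmhat_i \ge \eps_0$). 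Hence $\fhat \in \sF_{K,R_N}^k$ with $R_N = c/\sqrt{\lam}$ with high probability.

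Now apply Theorem \ref{thm:llpempbnddom} with $\sF = \sF_{K,R_N}^k$ and Proposition \ref{prop:sr} giving $(A,B) = (KR_N, KR_N)$. Choose, say, $\eps_0 = \Delta(1-\tau)/2$ and $\eps = \eps_0/(2(1+\Delta))$, both fixed as $N \to \infty$. The \SR-form bound on $\sup_{f \in \sF_{K,R_N}^k} |\sEt(f) - \sE_P^\ell(f)|$ is then $O\bigl( R_N \sqrt{\log(1/\delta)/((N/K) n)}\bigr) = O\bigl(\sqrt{\log(1/\delta)/(\lam (N/K))}\bigr)$ under \CIBM (and likewise, with an extra factor of $n^{-1/2}$, under \CIIM). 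The confidence is at least $1 - \delta - 2(N/K) e^{-2K\eps^2}$. Choosing $\delta = \delta_N \to 0$ slowly (e.g. $\delta_N = 1/\log(N/K)$), the hypothesis $\lam (N/K)/\log(N/K) \to \infty$ forces the $T_1$ bound to $0$, while $N = O(K^\beta)$ together with $K \to \infty$ drives $(N/K)e^{-2K\eps^2} \to 0$. Combining the four terms, $\sE_P^\ell(\fhat) - \inf_f \sE_P^\ell(f) \le \eta + o_P(1)$, and since $\eta$ was arbitrary, we obtain the claimed convergence in probability.

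The main obstacle is bookkeeping: the data-dependence of both the weights $w_i$ and the plug-in contaminations $\cphat_i$ means that Theorem \ref{thm:llpempbnddom} must be invoked conditionally on the event $\Omega_{\blp} \cap \Omega_{\bY}$ throughout, and the bound on $\|\fhat\|_{\sF_k}$ via $\sEt(0)$ must also be controlled on this event; additionally, the uniform bound is only available on a fixed ball, so one must either fix $R_N = c/\sqrt{\lam}$ a priori and argue that $\fhat$ lies in that ball on the high-probability event (cleanest), or else use a peeling/union-bound argument over dyadic radii (the logarithmic slack in the assumption $\lam(N/K)/\log(N/K) \to \infty$ is what allows the latter if preferred).
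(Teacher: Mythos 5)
Your proposal is correct and follows essentially the same route as the paper's proof: the norm bound $\|\fhat\|^2 \le O(1/(\eps_0\lambda))$ via $J(\fhat)\le J(0)$ on the event where $\Lpphat_i - \Lpmhat_i \ge \eps_0$, the application of Theorem \ref{thm:llpempbnddom} with Proposition \ref{prop:sr} on the ball of radius $\propto 1/\sqrt{\lambda}$, the comparison function $f_\epsilon$ from universality, and a vanishing choice of $\delta$ combined with the rate assumptions $\lambda(N/K)/\log(N/K)\to\infty$ and $N=O(K^\beta)$. The only cosmetic differences (four-term decomposition versus the paper's chained inequality, and handling $f_0$ by a singleton application rather than noting $f_\epsilon$ lies in the same ball for large $N$) do not change the argument.
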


\begin{proof}
Let $B$ denote the bound on the kernel. By Proposition \ref{prop:sr} and by Theorem \ref{thm:llpempbnddom} applied to $\sF_{B,R}^k$, for all $\eps_0 \in (0,\Delta(1-\tau))$, $\eps \in (0,\frac{\Delta(1-\tau)-\eps_0}{1+\Delta}]$, and $\delta \in (0,1]$, with probability at least $1 - \delta - \frac{N}{K}e^{- 2 K \eps^2}$,
\[
\sup_{f \in B_k(R)} \left| \sEt(f) - \sE_P^\ell(f) \right| \le \frac{D}{\eps_0}\sqrt{\frac{K}{N}}
\]
where $D =(1 + RB|\ell|) \sqrt{\log(2/\delta)} + 2 RB|\ell|$.

Observe that $J(\fhat) \leq J(0) \le \frac{|\ell|_0}{\eps_0}$. Therefore $\lambda 
\|\fhat\|^2 \leq \frac{|\ell|_0}{\eps_0} - \sEt(\fhat)\leq \frac{2|\ell|_0}{\eps_0}$ and 
so $\|\fhat\|^2 \leq \frac{2|\ell|_0}{\eps_0 \lambda}$.

Set $R = \sqrt{\frac{2|\ell|_0}{\eps_0 \lambda}}$. Note that $R$ grows asymptotically because $\lambda$ shrinks. We just saw that $\fhat \in B_k(R)$. 

Let $\epsilon>0$. Fix $f_\epsilon \in \sF_k$ s.t. $\sE_P^\ell(f_\epsilon) \leq \inf_{f} \sE_{P}^{\ell} + 
\epsilon/2$, possible since $k$ is universal \citep{steinwart08}. Note that $f_\epsilon \in B_k(R)$ for $N$ sufficiently 
large. In this case the generalization error bound implies that with probability $\geq 
1 - \delta - \frac{N}{K}e^{- 2 K \eps^2}$,
\begin{align*}
\sE_P^\ell(\fhat) &\leq \sEt(\fhat) + 
\frac{D}{\eps_0}\sqrt{\frac{K}{N}} \nonumber \\
&\leq \sEt(f_\epsilon) + \lambda\|f_\epsilon\|^2 - 
\lambda\|\fhat\|^2 + \frac{D}{\eps_0} \sqrt{\frac{K}{N}} \nonumber \\
&\leq \sEt(f_\epsilon) + \lambda\|f_\epsilon\|^2 + 
\frac{D}{\eps_0} \sqrt{\frac{K}{N}} \\
&\leq \sE_P^\ell(f_\epsilon) + \lambda\|f_\epsilon\|^2 + 
\frac{2D}{\eps_0} \sqrt{\frac{K}{N}}.
\end{align*}
Taking $\delta = K/N$, the result now follows.
\end{proof}

\section{Experimental Details}

The parameters of InvCal \cite{rueping10} and alter-$\propto$SVM \cite{yu13icml} are tuned by five-fold cross validation. We only consider the RBF kernel. Following \cite{yu13icml}, the parameters for both methods were set as follows. The kernel bandwidth $\gamma$ of the RBF kernel is chosen from $\left\{0.01, 0.1, 1 \right\}$. For InvCal, the parameters are tuned from $C_p \in \left\{0.1, 1, 10 \right\}$, and $\epsilon \in \left\{0, 0.01, 0.1 \right\}$. For alter-$\propto$SVM, the parameters are tuned from $C \in \left\{0.1, 1, 10 \right\}$, and $C_p \in \left\{1, 10, 100 \right\}$.

A Matlab implementation of both InvCal and alter-$\propto$SVM was obtained online.\footnote{https://github.com/felixyu/pSVM} These implementations rely on LIBSVM\footnote{https://www.csie.ntu.edu.tw/~cjlin/libsvm/} and CVX\footnote{http://cvxr.com/cvx/}. We modified the code to preform parameter tuning with cross validation as described above. 
LIBSVM contains its own random number generator that was unfortunately not seeded and hence the results for  alter-$\propto$SVM are not reproducible.

For the MAGIC dataset, InvCal takes roughly 30 minutes on 36 cores to complete the experiments for all bag sizes. For the Adult dataset, InvCal takes roughly 60 minutes on 36 cores. For alter-$\propto$-SVM, the approximated runtime on MAGIC dataset is 70 minutes on 144 cores. On Adult dataset, it is 100 minutes on 144 cores.

All three algorithms require random initialization. \citet{yu13icml} randomly initialize their algorithm ten times and take the result with smallest objective value. This was deemed to be computationally excessive, and hence we only consider one random initialization for each method. This could account for the relatively poor performance of alter-$\propto$SVM compared to past reported performance.

We found that in some cases, the code for alter-$\propto$-SVM wouldn't create a variable 'support\_v', which is used to predict the test label. This resulted from LIBSVM not returning any support vectors. If 'support\_v' did not exist for a given fold, we excluded that fold from the cross-validation error estimate. 

For bag size 8, in the experiments with fixed number of bags, on a handful of occasions there are only two bags in the validation data within a given fold of cross-validation, and both bags have the same label proportion. When this occurs, we cannot compute our criterion, and exclude such folds.

\bibliography{myBib}
\bibliographystyle{plainnat}

\end{document}